\DeclareCiteCommand{\parencite}[\mkbibparens]
  {\usebibmacro{prenote}}
  {\usebibmacro{citeindex}%
    \printtext[bibhyperref]{\usebibmacro{cite}}}
  {\multicitedelim}
  {\usebibmacro{postnote}}
\DeclareCiteCommand*{\parencite}[\mkbibparens]
  {\usebibmacro{prenote}}
  {\usebibmacro{citeindex}%
    \printtext[bibhyperref]{\usebibmacro{citeyear}}}
  {\multicitedelim}
  {\usebibmacro{postnote}}
\DeclareCiteCommand{\footcite}[\mkbibfootnote]
  {\usebibmacro{prenote}}
  {\usebibmacro{citeindex}%
  \printtext[bibhyperref]{ \usebibmacro{cite}}}
  {\multicitedelim}
  {\usebibmacro{postnote}}
\DeclareCiteCommand{\footcitetext}[\mkbibfootnotetext]
  {\usebibmacro{prenote}}
  {\usebibmacro{citeindex}%
   \printtext[bibhyperref]{\usebibmacro{cite}}}
  {\multicitedelim}
  {\usebibmacro{postnote}}
\global\boolfalse{cbx:parens}}
\tikzset{->, >=stealth', shorten >=1pt, auto, node distance=2cm, semithick, baseline=(current bounding box.center)}
\DeclareMathOperator{\sspan}{span}                       
\DeclareMathOperator{\relu}{relu}                       
\let\Im\relax                                           
\DeclareMathOperator{\Im}{Im}                           
\let\Re\relax
\DeclareMathOperator{\Re}{Re}                           
\newcommand\transaux{\intercal}                         
\newcommand\trans[1]{#1^\transaux}                      
\newcommand\transc[1]{#1^{\textup{H}}}                    
\newcommand\I{\mathrm{I}}                               
\newcommand*\rnn{\textsc{rnn}}
\newcommand*\mnist{\textsc{mnist}}
\newcommand*\ad{\textsc{ad}}
\newcommand*\spd{\textsc{spd}}
\DeclareMathOperator{\GLaux}{GL}                         
\NewDocumentCommand{\GL}{ m }{ \GLaux\pa{#1} }
\NewDocumentCommand{\M}{ >{\SplitArgument{1}{,}}m}{%
    \RR^{\prodaux #1}%
}
\NewDocumentCommand{\MC}{ >{\SplitArgument{1}{,}}m}{%
    \CC^{\prodaux #1}%
}
\NewDocumentCommand{\commasaux}{ m m }{%
    \IfNoValueTF{#2}{ #1 }{ #1, #2 }%
}
\NewDocumentCommand{\prodaux}{ m m }{%
    \IfNoValueTF{#2}{ #1 \times #1 }{ #1 \times #2 }%
}
\NewDocumentCommand{\liebrack}{s O{} >{\SplitArgument{1}{,}}m}{%
    \IfBooleanTF{#1}{\liebrackaux*#3}{\liebrackaux[#2]#3}%
}
\DeclarePairedDelimiterX{\liebrackaux}[2]{\lbrack}{\rbrack}{#1, #2}
\NewDocumentCommand{\scalarother}{s O{} >{\SplitArgument{1}{,}}m}{%
    \IfBooleanTF{#1}{\scalarotheraux*#3}{\scalarotheraux[#2]#3}%
}
\DeclarePairedDelimiterX{\scalarotheraux}[2]{\ll}{\gg}{#1, #2}
\title{Automatic Differentiation: Theory and Practice}
\author{Mario Lezcano-Casado\thanks{Senior Software Engineer at Quansight
\newline \hspace*{4.4ex}Correspondence: mlezcano@quansight.com}}
\date{}
\begin{document}
\maketitle

\tableofcontents

\begin{abstract}
    We present the classical coordinate-free formalism for forward and backward mode \ad{} in the real and complex setting.
    We show how to formally derive the forward and backward formulae for a number of matrix functions starting from basic principles.
\end{abstract}

This document is aimed at a reader who has some experience with auto-differentiation (\ad{}) frameworks (PyTorch, Tensorflow, JAX\textellipsis) and would want to get a better understanding of the mathematical ideas behind their forward and backward \ad{} engines. We expect that the reader will be able to derive from scratch forward and backward formulae for virtually any function in PyTorch, Tensorflow, or JAX after reading these notes.

We do not assume any previous mathematical knowledge, but some mathematical maturity, as we will present the necessary results from different areas of mathematics in the text. That said, some familiarity with linear algebra and multivariate calculus would be helpful.

There will be a plethora of examples throughout the text to make the abstract definitions easier to digest. We will omit some computations in the examples, but these will always be simple enough so that the reader should be able to deduce them themselves after some thinking.

\Cref{sec:linear_algebra,sec:multivariable_calculus} contain a review of a number of definitions and results from linear algebra and multivariate calculus. We encourage the reader that already feels comfortable with this material to have a quick read through these, as we will use them to set the notation.

\Cref{sec:forward_mode,sec:backward_mode} show how to use the definitions from the first two sections to derive the forward and backward (\ad) formulae for most functions used in machine learning or statistics.

\Cref{sec:complex} treats the more advance topic of differentiation of functions of complex variable. It may be skipped on a first read.

\clearpage
\section{Coordinate-free Linear Algebra}\label{sec:linear_algebra}
In this section we answer the question: What is a coordinate-free approach and why do we care? With this objective in mind, we define abstract vector spaces, bases, and linear maps.
A secondary objective of this section is to show that thinking of a linear map as something that preserves the operations of a vector space is easier than thinking about it as a matrix.

\subsection{Vector spaces}
\begin{definition}\label{def:vector_space}
    A \textbf{vector space} over a field $\KK$ (think $\KK = \RR$ or $\KK = \CC$) is a set $V$ with two operations $\deffun{+ : V \times V -> V;}$ and $\deffun{\cdot : \KK \times V -> V;}$ such that for every $u, v, w \in V$ and $a, b \in \KK$

    \vspace{\baselineskip}
    \noindent
    \begin{minipage}[t]{.5\textwidth}
    \begin{itemize}
        \item Associativity: $(u + v) + w = u + (v + w)$
        \item Identity: There exists a $0 \in V$ such that \\\phantom{Identity: }$\forall v \in V$, $v + 0 = v$
        \item Inverse: For every $v \in V$ there is a $-v \in V$\\\phantom{Inverse: }such that $v + (-v) = 0$
        \item Commutativity: $u + v = v + u$
    \end{itemize}
    \end{minipage}%
    \begin{minipage}[t]{.5\textwidth}
    \begin{itemize}
        \item Compatibility: $a\cdot (b \cdot v) = (ab)\cdot v$
        \item Scalar identity: $1\cdot v = v$ where $1\in\KK$\\\phantom{Scalar identity: }is the multiplicative identity
        \item Distributivity $1$: $a \cdot (u+v) = a \cdot u + a \cdot v$
        \item Distributivity $2$: $(a+b) \cdot u = a \cdot u + b \cdot u$
    \end{itemize}
    \end{minipage}
    \\[8pt]

    As with any mathematical multiplication, we often omit the symbol and simply write $av = a \cdot v$.

    The elements of $V$ are called \textbf{vectors} and the elements of $\KK$ are called \textbf{scalars}.

    Vector spaces over the real (resp.\ complex) numbers are called \textbf{real} (resp.\ \textbf{complex}) vector spaces.
\end{definition}

The first time one encounters these axioms, they can be quite overwhelming. They become a bit easier to digest once one realises that all they are doing is to model $\RR^n$ abstractly. Once one sits down and checks that $\RR^n$ fulfils all these axioms, we find the first example of a vector space.

\begin{example}
    $\RR^n$ is a vector space over $\RR$.
\end{example}

$\RR^n$ is the most important example of a vector space. In fact, virtually all the vector spaces over the real numbers that we will work with will have an $\RR^n$ lurking behind one way or another. Whenever we think about an abstract vector space, it is good to picture in mind something that looks like $\RR^n$ to fix the ideas. That being said, there are other vector spaces.

\begin{example}\label{ex:complex}
    $\CC^n$ is a vector space over $\CC$.

    $\CC^n$ is also a real vector space, as addition of vectors is performed on the real and complex part separately.
    In fact, $\CC^n$ is pretty much the same as $\RR^{2n} = \RR^n \times \RR^n$ as a real vector space.
\end{example}

We finish with some slightly less conventional examples of vector spaces.

\begin{example}\label{ex:3}
    The set of polynomials in one variable of degree less or equal to $n$ is a real vector space.

    The set of polynomials in one variable of degree exactly equal to $n$ is \textbf{not} a real vector space.

    The set of matrices $\M{m, n}$ is a real vector space.

    The set of tensors with real entries of a fixed shape is a real vector space.

    The set of infinite sequences of real numbers $(a_i)_{i=1}^\infty = (a_1, a_2, \dots)$ is a real vector space.
\end{example}

It is a good exercise to convince oneself that all these examples are what they claim to be. In general, to check that an object is indeed a vector space, it tends to be enough to check that multiplying a vector by a scalar gives a vector in the set and adding two vectors in the set gives a vector in the set (the second example above is a counterexample of this).

\subsection{Vector subspaces and bases}
\begin{definition}
    A subset $U$ of a vector space $V$ over $\KK$ is a \textbf{vector subspace} or \textbf{linear subspace} if it is closed under addition and multiplication by scalars. In symbols,
    \[
        u+v \in U \quad \text{and} \quad av \in U \mathrlap{\qquad \forall u,v \in U, a \in \KK}
    \]
\end{definition}

\begin{example}
    The vectors of the form $(0, a_2, \ldots, a_n)$ with $a_i \in \RR$ form a linear subspace of $\RR^n$.

    The polynomials in one real variable of degree less than $n$ form a linear subspace of the vector space of all polynomials in one real variable.
\end{example}

A simple way to define subspaces is to pick a number of vectors and consider all the possible combinations of additions and multiplication by scalars that one can form---\ie, consider their closure.

\begin{definition}
    The \textbf{vector space spanned by $v_1, \dots, v_n \in V$} is defined as
    \[
    \sspan(v_1, \dots, v_n) = \set[\Big]{\sum_{i=1}^n a_iv_i | a_i \in \KK} \subset V.
    \]
\end{definition}

The vector space spanned by a number of vectors is always a vector subspace of $V$ by definition, as it is closed with respect to sums and product by scalars.

\begin{example}\label{ex:span} We compute the span of some sets of vectors in $V = \RR^3$:
    \begin{itemize}
        \item Let $v_1 = \pa{0, 1, 0}$. $\sspan\pa{v_1} = \set{\pa{0, a, 0} | a \in \RR}$.
        \item Let $v_2 = \pa{0, 1, 1}$. $\sspan\pa{v_1, v_2} = \set{\pa{0, a + b, b} | a, b \in \RR}$. Setting $a = c-b$ for a new variable $c \in \RR$, we have that $\sspan\pa{v_1, v_2} = \set{\pa{0, c, b} | c, b \in \RR}$. In plain words, $v_1$ and $v_2$ generate the subspace of all the vectors in $\RR^3$ with first coordinate equal to zero.
        \item Let $v_3 = \pa{0, 1, 2}$. $\sspan\pa{v_1, v_2, v_3} = \sspan\pa{v_1, v_2}  = \set{\pa{0, a, b} | a, b \in \RR}$. What causes this is that $v_3 = 2v_2 - v_1$. As such, any vector that we can form using $v_3$, we could already form as a linear combination of $v_1$ and $v_2$. Intuitively, the third vector is ``redundant''.
    \end{itemize}
\end{example}

If we have vectors $v_1, \dots, v_n$, and we express another vector as a linear combination of these, $v = \sum_{i=1}^n a_i v_i$, we would like to be able to represent $v$ as $(a_1, \dots, a_n)$, as we do when we are in $\RR^n$. This identification may not be possible because of two reasons.

\begin{enumerate}
    \item \textbf{Our set might not span all the vectors in $V$}. In the examples in~\Cref{ex:span}, none of the sets of vectors that we considered was able to span the vector $v = (1, 0, 0)$. Formally, we write $(1, 0, 0) \not\in \sspan\pa{v_1, v_2, v_3}$. More generally, it might be the case that $\sspan\pa{v_1, \dots, v_n} \subsetneq V$.
    \item \textbf{The representation $v = \sum_{i=1}^n a_iv_i$ might not be unique}. This is what happened in the last example in~\Cref{ex:span}. Since $v_3 = 2v_2 - v_1$, if we choose a vector in $\sspan(v_1, v_2, v_3)$, for example $v = (0, 2, 2)$, we have that we can write
        \[
            v = 2v_2 = v_1 + v_3 = \tfrac{2}{3}v_1 + \tfrac{2}{3}v_2 + \tfrac{2}{3}v_3.
        \]
        Thus, we cannot identify $v = \sum_{i=1}^3 a_i v_i$ with $(a_1, a_2, a_3)$ because the choice of $a_i$ is not unique.
\end{enumerate}

These problems motivate the following definition:
\begin{definition}
    A set of vector $B = \set{v_1, \dots, v_n}$ is called a \textbf{basis} of a vector space $V$ if $\sspan(B) = V$ and any element $v \in V$ may be represented uniquely in terms of the vectors in $B$. If there is a basis of $n$ vectors for $V$, we say that \textbf{$V$ has dimension $n$} or $\dim V = n$.
\end{definition}

In plain words, a basis of a vector space is a set of vectors such that
\begin{itemize}
    \item Spans the whole $V$.
    \item Has no redundancies: No vector in $B$ can be expressed as a linear combination of others.
\end{itemize}

\subsection{Linear maps}
The other object that linear algebra studies are linear maps. These are maps between vector spaces that respect the vector space structure.

\begin{definition}\label{def:linear_map}
    A map $\deffun{T : V -> W;}$ between vector spaces over a field $\KK$ is said to be $\KK$-\textbf{linear}---or simply \textbf{linear}---if
    \[
        T(u+v) = T(u) + T(v) \qquad T(av) = aT(v) \mathrlap{\qquad \forall u,v \in V, a \in \KK.}
    \]
\end{definition}

The following lemma shows that there is a very close connection between linear maps and bases.

\begin{lemma}[A linear map is defined by its values on a basis]\label{lemma:linear_map_basis}
    Let $V$ be a vector space with a basis $B_V = \set{v_1, \dots, v_n}$. If we choose $w_1, \dots, w_n \in W$, there exists a unique linear map $\deffun{T : V -> W;}$ such that $T(v_i) = w_i$.
\end{lemma}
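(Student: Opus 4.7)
The plan is to exploit the defining feature of a basis, namely that every vector admits a \emph{unique} expression as a linear combination of basis vectors. This immediately suggests that the candidate map $T$ is essentially forced on us, which will give both existence and uniqueness in parallel.

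For existence, I would proceed as follows. Given any $v \in V$, the basis property lets me write $v = \sum_{i=1}^n a_i v_i$ with scalars $a_i \in \KK$ that are uniquely determined by $v$. I then \emph{define}
\[
    T(v) \coloneqq \sum_{i=1}^n a_i w_i \in W.
\]
The uniqueness of the coefficients $a_i$ is exactly what is needed to ensure $T$ is a well-defined function (two different expansions would give two possibly different values, but no such expansions exist). Setting $v = v_j$ corresponds to $a_j = 1$ and $a_i = 0$ for $i \neq j$, so $T(v_j) = w_j$ as required. Linearity then reduces to a direct computation: if $v = \sum_i a_i v_i$ and $v' = \sum_i a'_i v_i$, then by the vector space axioms $v + v' = \sum_i (a_i + a'_i) v_i$ and $\lambda v = \sum_i (\lambda a_i) v_i$, and these are again the unique basis expansions, so plugging into the definition of $T$ yields $T(v + v') = T(v) + T(v')$ and $T(\lambda v) = \lambda T(v)$.

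For uniqueness, suppose $S\colon V \to W$ is any linear map with $S(v_i) = w_i$ for all $i$. For an arbitrary $v \in V$ with unique expansion $v = \sum_i a_i v_i$, applying linearity of $S$ (in the form of \Cref{def:linear_map}, used inductively on the sum) yields
\[
    S(v) = S\!\left(\sum_{i=1}^n a_i v_i\right) = \sum_{i=1}^n a_i S(v_i) = \sum_{i=1}^n a_i w_i = T(v),
\]
so $S = T$ as functions on $V$.

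The only real subtlety is the well-definedness step, and it rests entirely on the uniqueness clause in the definition of a basis; without it, the assignment $v \mapsto \sum_i a_i w_i$ could depend on the choice of expansion and fail to define a function at all. Everything else is a mechanical unwinding of the vector space axioms and the definition of linearity.
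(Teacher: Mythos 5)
Your proof is correct and follows essentially the same route as the paper's: define $T$ via the unique basis expansion $v = \sum_i a_i v_i \mapsto \sum_i a_i w_i$, and derive uniqueness by expanding an arbitrary $v$ and applying linearity of any competing map. If anything, you are slightly more thorough than the paper, which defines $T$ by the same formula but does not explicitly verify that the resulting map is linear or well-defined; your attention to those two points is a welcome addition rather than a deviation.
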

\begin{proof}
     Since $B_V$ is a basis, there exists a unique representation of an arbitrary element $v\in V$ as $v = a_1v_1 +\dots+a_nv_n$.
    We can define $T$ on an arbitrary vector $v$ as
    \[
        T(v) = T(a_1 v_1 + \dots + a_n v_n) = a_1T(v_1) + \dots + a_nT(v_n).
    \]
    so $T$ exists. To show that it is unique, suppose that there is another linear map $T'$ that takes the same values on $B_V$. We then have that they are equal
    \[
        T(v) = a_1T(v_1) + \dots + a_nT(v_n) = a_1T'(v_1) + \dots + a_nT'(v_n) = T'(a_1 v_1 + \dots + a_n v_n) = T'(v)
    \]
    where we have used that $T(v_i) = T'(v_i)$ and the fact that both $T$ and $T'$ are linear.
\end{proof}

The best linear maps are those that map bases to bases.

\begin{definition}\label{def:linear_iso}
    A linear map $\deffun{T : V -> W;}$ is called a \textbf{linear isomorphism} if for any basis $B$ of $V$, $T(B) = \set{T(v_1), \dots, T(v_n)}$ is a basis of $W$.
\end{definition}

This definition packs quite a bit of information. First, it says that $T$ is surjective. This is because, since $T(B)$ is a basis of $W$, $\sspan(T(B)) = W$ and any $w \in W$ is in the image of $T$:
\[
    w = a_1T(v_1) + \dots + a_nT(v_n) = T(a_1v_1 + \dots + a_n v_n).
\]
It also says that this representation of a vector in terms of $T(B)$ is unique, as that is the other property that bases have. In summary, we can represent any vector in $W$ uniquely in terms of $B$ and $T$. Reciprocally, if we denote basis $T(v_i) = w_i$, by~\Cref{lemma:linear_map_basis}, there exists a unique linear map $\deffun{T' : W -> V;}$ such that $T'(w_i) = v_i$. By using the linearity of $T$ and $T'$ It is easy to prove that
\[
    T'(T(v)) = v \quad \text{and}\quad T(T'(w)) = w \mathrlap{\qquad\forall v \in V, w \in W.}
\]
It should be clear now that a better name for this linear map would be $T^{-1}$. In fact, we have proved the following:
\begin{proposition}
    A linear isomorphism as per~\Cref{def:linear_iso} has a linear inverse.\footnote{The reciprocal also holds making this another possible definition of a linear isomorphism.}
\end{proposition}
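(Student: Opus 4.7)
The plan is to build the inverse map directly via \Cref{lemma:linear_map_basis} and then verify it is a two-sided inverse by a uniqueness argument on bases. First I would fix any basis $B_V = \set{v_1, \dots, v_n}$ of $V$ and set $w_i = T(v_i)$. By the defining property of a linear isomorphism, $B_W = \set{w_1, \dots, w_n}$ is a basis of $W$. Then \Cref{lemma:linear_map_basis}, applied in the opposite direction (swapping the roles of $V$ and $W$), produces a unique linear map $\deffun{T' : W -> V;}$ with $T'(w_i) = v_i$.

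Next I would check that $T'$ is a two-sided inverse of $T$. For $T' \circ T$, observe that it is a linear map $V \to V$ satisfying $(T' \circ T)(v_i) = T'(w_i) = v_i$, which are exactly the values of the identity $\id_V$ on $B_V$. By the uniqueness half of \Cref{lemma:linear_map_basis}, $T' \circ T = \id_V$. Symmetrically, $T \circ T'$ is a linear map $W \to W$ agreeing with $\id_W$ on the basis $B_W$, so $T \circ T' = \id_W$ by the same uniqueness statement.

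There is really no hard step here; the proof is essentially bookkeeping on top of \Cref{lemma:linear_map_basis}. The only subtle point worth flagging explicitly is that the definition of linear isomorphism only asserts $T(B)$ is a basis \emph{for any} basis $B$ of $V$; I would pick one such $B$ and carry out the construction, noting in passing that the resulting $T'$ does not depend on this choice, since any two linear maps agreeing with $T^{-1}$ on one basis must coincide (again by \Cref{lemma:linear_map_basis}). This also makes it transparent that $T'$ is itself a linear isomorphism, since $T'(B_W) = B_V$ is a basis of $V$, which is a small bonus observation the proposition does not explicitly claim but the footnote implicitly uses.
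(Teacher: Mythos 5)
Your proposal is correct and follows essentially the same route as the paper: denote $w_i = T(v_i)$, use \Cref{lemma:linear_map_basis} to build $T'$ with $T'(w_i) = v_i$, and verify that $T'$ is a two-sided inverse (the paper checks this ``by linearity'' while you invoke the uniqueness half of the lemma, which amounts to the same computation). No gaps.
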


This is another characterisation of linear isomorphisms which is very useful in practice:
\begin{proposition}[Characterisation of Linear Isomorphisms]\label{prop:characterisation_linear}
    Let $\deffun{T : V -> W;}$ be a linear map. If $T(B)$ is a basis of $W$ for one basis $B$ of $V$, then $T(B')$ is a basis of $W$ for any basis $B'$ of $V$, that is, $T$ is a linear isomorphism.
\end{proposition}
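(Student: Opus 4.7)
The plan is to reduce the problem to the existence of a two-sided inverse for $T$ and then transport the basis properties of $B'$ to $T(B')$ via this inverse.

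First, I would unpack the hypothesis: we are given that $T(B)$ is a basis of $W$ for some specific basis $B = \set{v_1, \dots, v_n}$ of $V$. Note that the construction in the paragraphs immediately preceding~\Cref{prop:characterisation_linear} only uses a \emph{single} basis $B$ to produce the inverse map $T^{-1}\colon W \to V$: apply~\Cref{lemma:linear_map_basis} to $T(v_i) \mapsto v_i$ to get a linear map $T^{-1}$, and verify $T^{-1}\circ T = \id_V$ and $T\circ T^{-1} = \id_W$ by checking these compositions on the basis $B$ (respectively on $T(B)$) and invoking linearity. So from the single-basis hypothesis we already extract a linear two-sided inverse $T^{-1}$, which in particular makes $T$ a bijection.

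Next, let $B' = \set{v'_1, \dots, v'_m}$ be an arbitrary basis of $V$. I would verify the two defining properties of a basis for $T(B') = \set{T(v'_1), \dots, T(v'_m)}$:
\begin{itemize}
    \item \textbf{Spanning:} Given $w \in W$, set $v = T^{-1}(w) \in V$. Since $B'$ is a basis, write $v = \sum_i a_i v'_i$ uniquely. Applying $T$ and using linearity gives $w = T(v) = \sum_i a_i T(v'_i)$, so $T(B')$ spans $W$.
    \item \textbf{Uniqueness:} If $\sum_i a_i T(v'_i) = \sum_i b_i T(v'_i)$, linearity of $T$ yields $T\pa{\sum_i a_i v'_i} = T\pa{\sum_i b_i v'_i}$; applying $T^{-1}$ gives $\sum_i a_i v'_i = \sum_i b_i v'_i$, and uniqueness of representation in the basis $B'$ forces $a_i = b_i$ for all $i$.
\end{itemize}

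The only subtle point, and hence the place I would be most careful, is justifying the linear inverse from a \emph{single} basis rather than from the definition (which quantifies over all bases). This is exactly what is done in the discussion between~\Cref{def:linear_iso} and~\Cref{prop:characterisation_linear}, so I would explicitly cite that passage (or restate the short argument) to avoid circularity with~\Cref{def:linear_iso}. Once $T^{-1}$ is in hand, the rest of the argument is essentially a bookkeeping exercise: spanning uses surjectivity via $T^{-1}$, and uniqueness uses injectivity via $T^{-1}$, together with the basis property of $B'$.
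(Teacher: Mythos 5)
Your proof is correct: the paper actually states \Cref{prop:characterisation_linear} without proof, and your argument is exactly the natural completion of the discussion that precedes it, where the linear two-sided inverse $T^{-1}$ is already built from the single basis $B$ via \Cref{lemma:linear_map_basis}. Transporting the spanning and uniqueness properties of an arbitrary basis $B'$ through $T$ and $T^{-1}$ is the intended route, and you correctly flag the one subtlety---that the inverse must be obtained from one basis rather than from \Cref{def:linear_iso} itself, to avoid circularity.
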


Linear isomorphisms are the nicest maps in linear algebra, as they allow us to translate computations on one space to computations on another and back.

\begin{example}\label{example:polynomials}
    A basis of the set of polynomials $\mathcal{P}_{<n}$ with real coefficients of degree less than $n$ is given by $B = \set{1, x, x^2, \dots, x^{n-1}}$, as any polynomial $p \in \mathcal{P}_{<n}$ is represented as a linear combination of these as $p(x) = \sum a_i x^i$. Denoting $p_i(x) = x^i$, we may map this basis of $\mathcal{P}_{<n}$ into the canonical basis of $\RR^n$ via
    \[
        T(p_0) = (1, 0, \dots, 0) \quad
        T(p_1) = (0, 1, \dots, 0) \quad
        \dots \quad
        T(p_{n-1}) = (0, 0, \dots, 1)
    \]
    This extends to a linear map $\deffun{T: \mathcal{P}_{<n} -> \RR^n;}$ by~\Cref{lemma:linear_map_basis} and since it maps a basis to a basis, by~\Cref{prop:characterisation_linear}, it is a linear isomorphism. In other words, $\mathcal{P}_{<n}$ and $\RR^n$ are \emph{isomorphic} as real vector spaces.
\end{example}

\subsection{Differences between a real vector space and $\RR^n$}\label{sec:rn}
For now, we have set up the basic ideas of linear algebra: We have a vector space, which is an abstract space in which we can add vectors and multiply vectors by scalars, we can encode most of the information of a vector space into a basis, and we can map vector spaces to other vector spaces via linear maps. In particular, if we have a linear isomorphism between vector spaces, this allows us to translate operations on one to operations on the other and back, rendering them somewhat equivalent.

On the other hand, while doing all this, we have barely talked about $\RR^n$. This is a bit odd given that it is the main space that we want to study. The only thing that we have mentioned is that ``real vector spaces model $\RR^n$'' and that, as one could expect, $\RR^n$ is a real vector space. But why would we care about real vector spaces in the abstract? Why not simply work on $\RR^n$?

There are two main differences between real vector spaces and $\RR^n$. One huge and one subtler.

The huge one is that a general vector space may have infinite dimensions. We put an example of such a vector space in~\Cref{ex:3}, when we talked about infinite sequences of real numbers. Infinite-dimensional vector spaces are whole different beasts, and we will not talk about them here.

Now, a real vector space $V$ of dimension $n$, it looks much more similar to $\RR^n$ as
\begin{theorem}\label{thm:RnisoV}
    Let $V$ be a real vector space of dimension $n$, there exists a linear isomorphism $\deffun{T : V -> \RR^n;}$.\footnote{The real numbers are not important here. A vector space of dimension $n$ over $\KK$ is isomorphic to $\KK^n$ using the same argument.}
\end{theorem}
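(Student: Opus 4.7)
The plan is to build the desired isomorphism by prescribing its values on a basis of $V$ and letting the preceding machinery do the heavy lifting. Concretely, since $V$ has dimension $n$, by definition there is a basis $B_V = \set{v_1, \dots, v_n}$ of $V$. On the target side, the canonical basis $B_{\RR^n} = \set{e_1, \dots, e_n}$ of $\RR^n$, where $e_i$ has a $1$ in the $i$-th entry and zeros elsewhere, is a basis of $\RR^n$ (this should be checked, but it is the standard computation: any $(a_1,\dots,a_n) = \sum a_i e_i$ uniquely).

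Next I would invoke \Cref{lemma:linear_map_basis} with the choice $w_i = e_i$ to produce a unique linear map $\deffun{T : V -> \RR^n;}$ satisfying $T(v_i) = e_i$ for $i = 1, \dots, n$. By construction, $T(B_V) = B_{\RR^n}$, which is a basis of $\RR^n$. Applying \Cref{prop:characterisation_linear} then yields that $T$ is a linear isomorphism, which is exactly the statement of the theorem.

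There is essentially no obstacle here: the real content has already been packaged into \Cref{lemma:linear_map_basis} and \Cref{prop:characterisation_linear}, and the theorem amounts to assembling them around the particular choice of mapping a basis of $V$ to the canonical basis of $\RR^n$. The only thing worth flagging is that one must first observe that $B_{\RR^n}$ is indeed a basis, so that ``maps a basis to a basis'' is a meaningful hypothesis when applying \Cref{prop:characterisation_linear}. For the footnoted extension to arbitrary $\KK$, the same proof goes through verbatim with $e_i \in \KK^n$, since neither \Cref{lemma:linear_map_basis} nor \Cref{prop:characterisation_linear} used anything specific about $\RR$.
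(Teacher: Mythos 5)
Your proof is correct and follows exactly the paper's approach: the paper's own proof just says to pick a basis and repeat the construction of \Cref{example:polynomials}, sending $T(v_i) = e_i$ and invoking \Cref{lemma:linear_map_basis} and \Cref{prop:characterisation_linear}, which is precisely what you have spelled out. No issues.
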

\begin{proof}
    Choose a basis $B = \set{v_1, \dots, v_n}$ of $V$ and perform the same construction that we did in~\Cref{example:polynomials} constructing $T$ by sending $T(v_i) = e_i$.
\end{proof}

This isomorphism says that we can identify every element from $V$ with a vector from $\RR^n$. As such, if we need to do computations in $V$, we can map $V$ into $\RR^n$ by $T$, do the computations on $\RR^n$ and map the result back to $V$ via $T^{-1}$, potentially simplifying the abstract space to working with $\RR^n$.

In the same way, if we have a linear map between vector spaces, and we have bases for these vector spaces, we can write this map in coordinates, giving raise to the following well-known concept.

\begin{definition}\label{def:matrix}
    Given a linear map between real vector spaces $\deffun{T : V -> W;}$ with bases $B_V = \set{v_1, \dots, v_n}, B_W = \set{w_1, \dots, w_m}$, we define the \textbf{matrix associated to $T$} as the element $A \in \M{m, n}$ where the $i$-th column is given by the coordinates of $T(v_i)\in W$ in the basis $B_W$. In symbols,
    \[
        T(v_i) = A_{1, i}w_1 + \dots + A_{m,i}w_m\mathrlap{\qquad \text{for }i = 1, \dots, n.}
    \]
\end{definition}

Matrices are representations of linear maps, and any $n$-dimensional real vector space looks exactly the same as $\RR^n$, so it looks like we are saying that all these things are the same? Well, not quite, as there is a subtle yet fundamental difference:
\begin{center}
    $\RR^n$ is an abstract vector space $V$ of dimension $n$ \textbf{together with a choice of an ordered basis}.
\end{center}
The point here is that, given a finite-dimensional abstract vector space, there is no canonical choice of a basis.
It is for this reason that the approach to linear algebra that simply talks about vector spaces and linear maps and not about matrices and $\RR^n$ is often referred to as \textbf{coordinate-free}.
There are times, as it happened in~\Cref{example:polynomials}, that there exists a clear choice, but others there is not distinguished basis.

We will spend the rest of these notes showing how thinking about a map $T$ being linear if
\[
    T(u+v) = T(u) + T(v)\qquad \text{and}\qquad T(au) = aT(u)\mathrlap{\qquad\forall u,v\in V, a\in\RR.}
\]
makes computations easier than thinking about $T$ as being a matrix. We show an example as a taster:
\begin{example}[Linear maps are simpler than matrices]
    If we think of a linear map as being a matrix, it might not be clear at first sight that the trace of a matrix $\deffun{\tr : \M{n} -> \RR;}$ is a linear map. Now, when we look at it using its abstract definition (\Cref{def:linear_map}) this becomes obvious as
    \[
        \tr(A + B) = \tr(A) + \tr(B) \qquad \tr(cA) = c\tr(A)\mathrlap{\qquad \forall A,B \in \M{n}, c\in \RR.}
    \]
    Transposing a matrix $A \mapsto \trans{A}$ or taking the first two columns of a matrix are other examples of linear maps on matrices. We will see many more in~\Cref{sec:forward_mode}.
\end{example}
\section{Multivariable Calculus}\label{sec:multivariable_calculus}
In this section, we go over the standard definition of a differentiable map from multivariable calculus. We will avoid the language of coordinates---\eg, Jacobians and partial derivatives---whenever possible.

\begin{definition}\label{def:differential}
    A map $\deffun{f : \RR^m -> \RR^n;}$ is \textbf{differentiable} at a point $x \in \RR^m$ if there exists a linear map $\deffun{L_x : \RR^m -> \RR^n;}$ such that
    \[
        \lim_{h \to 0_{\RR^m}}\frac{f(x+h) - f(x) - L_x(h)}{\norm{h}_{\RR^m}} = 0_{\RR^n}.
    \]
    In this case, we denote the linear map $\pa{df}_x = L_x$, and we call it the \textbf{differential of $f$ at $x$}. We say that $\pa{df}_x(v)$ is the \textbf{directional derivative of $f$ at $x$ in the direction $v$}.
\end{definition}

\begin{remark}[The differential is a first order linear approximation]
All this definition is saying is that there exists a linear first order approximation of $f$ at $x$. In other words, if we subtract the approximation from $f$, what we have left vanishes at zero slower than linearly---\ie, we have removed all the ``linear terms'' of $f$. Another way to look at this is by thinking that $\pa{\dif f}_x$ is the first term of the Taylor expansion of $f$
\begin{equation}\label{eq:taylor_expansion}
    f(x+h) = f(x) + \pa{\dif f}_x(h) + o(h)
\end{equation}
where $f(x)$ is the $0^{\text{th}}$ order approximation to $f$ at $x$, $\pa{\dif f}_x(h)$ is the $1^{\text{st}}$ order approximation and the $o(h)$ denotes that the difference vanishes at zero slower than linearly.\footnote{Formally, $o(h)$ means that there exists a function $E$ such that
$f(x+h) = f(x) + \pa{\dif f}_x(h) + E(h)\norm{h}$ with $\lim_{h \to 0}E(h) = 0$. This is clear choosing $E(h) = \frac{f(x+h) -f(x) - \pa{\dif f}_x(h)}{\norm{h}}$.}
\end{remark}

Even though it is possible to generalise some of these concepts to deal with functions that are not differentiable such as $\relu(x) = \max(x, 0)$,
\begin{center}
We will always assume that the maps we consider are \textbf{differentiable}.
\end{center}

\begin{remark}[Directional derivatives]
    We defined the directional derivative of a differentiable function $f$ at $x$ in the direction $v$ as $\pa{\dif f}_x(v)$. Now, since the function is differentiable, the limit when $h$ approaches zero always exist, so we can give a much more reasonable definition of directional derivative. We can let $h$ tend to zero in the direction of $v \in \RR^m$ with $\norm{v} = 1$, by letting $h = \epsilon v$ for $\epsilon \in \RR$, getting that
    \[
        \pa{\dif f}_x(v) = \deriv{\epsilon}\Big\vert_{\epsilon=0} f(x+\epsilon v)
    \]
    where $\deriv{\epsilon}\Big\vert_{\epsilon=0}$ differentiates $f(x+\epsilon v)\in\RR^n$ coordinate-wise at $\epsilon=0$. This shows how $\pa{\dif f}_x(v) \in \RR^n$ represents how $f$ varies when $x$ is approached in the direction $v$.
\end{remark}

\begin{remark}[The matrix associated to the differential is the Jacobian]
    The differential of $f$ is a linear map $\deffun{\pa{\dif f}_x : \RR^m -> \RR^n;}$ and as such, it has a matrix representation (\Cref{def:matrix}). The columns of this matrix representation of $\pa{\dif f}_x$ are given by evaluating $\pa{\dif f}_x$ on the vectors $\set{e_i}$ of the basis of $\RR^n$
    \[
        \pa{\dif f}_x(e_i) = \deriv{\epsilon}\Big\vert_{\epsilon=0} f(x+\epsilon e_i) = \frac{\partial f}{\partial x_i}(x).
    \]
    In other words, the matrix associated to the differential is the usual Jacobian matrix.
\end{remark}

It will come to no surprise that we will never talk about partial derivatives nor use the Jacobian. We will just use the definition of the differential together with two results. The first one is the chain rule:
\begin{theorem}[Chain rule]\label{thm:chain_rule}
    Let $\deffun{f : \RR^m -> \RR^n;}$ and $\deffun{g : \RR^n -> \RR^p;}$ be two differentiable maps. We have that
    \[
        \dif \pa{g \circ f}_x = \pa{\dif g}_{f(x)} \circ \pa{\dif f}_x\mathrlap{\qquad\forall x \in \RR^m}
    \]
    or equivalently
    \[
        \dif \pa{g \circ f}_x\pa{v} = \pa{\dif g}_{f(x)}\cor{\pa{\dif f}_x\pa{v}}\mathrlap{\qquad \forall x,v\in \RR^m.}
    \]
\end{theorem}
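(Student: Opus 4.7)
The plan is to verify the defining limit of Definition~\ref{def:differential}: letting $L := \pa{\dif g}_{f(x)} \circ \pa{\dif f}_x$ (a composition of linear maps, hence linear), I would show that $L$ serves as the differential of $g \circ f$ at $x$, which by the uniqueness built into the definition forces $\dif\pa{g \circ f}_x = L$. Writing out the first-order expansions furnished by differentiability of $f$ at $x$ and of $g$ at $f(x)$:
$$f(x+h) = f(x) + \pa{\dif f}_x(h) + r_f(h), \qquad g(f(x)+k) = g(f(x)) + \pa{\dif g}_{f(x)}(k) + r_g(k),$$
with $r_f(h) = o(h)$ and $r_g(k) = o(k)$. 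Setting $k(h) := f(x+h) - f(x) = \pa{\dif f}_x(h) + r_f(h)$ and substituting into the expansion for $g$, linearity of $\pa{\dif g}_{f(x)}$ yields
$$g(f(x+h)) - g(f(x)) - L(h) = \pa{\dif g}_{f(x)}(r_f(h)) + r_g(k(h)).$$
So it remains to prove that each of the two terms on the right is $o(h)$.

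For the first term, I would invoke the fact that any linear map on a finite-dimensional space is bounded, giving $\norm{\pa{\dif g}_{f(x)}(r_f(h))} \le C\norm{r_f(h)}$, which inherits the $o(h)$ behaviour directly from $r_f$. For the second, the same boundedness applied to $\pa{\dif f}_x$ together with $r_f(h) = o(h)$ yields $\norm{k(h)} \le C'\norm{h}$ for sufficiently small $h$, so in particular $k(h) \to 0$; writing $\norm{r_g(k(h))}/\norm{h} = \bigl(\norm{r_g(k(h))}/\norm{k(h)}\bigr) \cdot \bigl(\norm{k(h)}/\norm{h}\bigr)$ whenever $k(h) \ne 0$, the first factor vanishes as $h \to 0$ and the second stays bounded, so the product tends to $0$ (the case $k(h) = 0$ is trivial since then $r_g(k(h)) = r_g(0) = 0$).

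The main obstacle is precisely this composition of little-o expressions: $r_g$ is only controlled when its argument is small, so one must justify that the substitution $h \mapsto k(h)$ behaves well, which rests on the two quantitative bounds that a linear map between finite-dimensional spaces admits a genuine operator-norm bound, and that $k(h)$ is dominated linearly by $\norm{h}$ near zero. Everything else is algebraic rearrangement and an appeal to the uniqueness of the linear approximation, but this bookkeeping is the non-trivial content of the theorem.
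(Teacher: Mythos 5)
The paper states the chain rule without proof, so there is no argument of its own to compare yours against. Your proof is correct and is the standard one: decomposing the error as $\pa{\dif g}_{f(x)}\pa{r_f(h)} + r_g\pa{k(h)}$, invoking operator-norm bounds for the two (finite-dimensional, hence bounded) linear maps, and treating the case $k(h)=0$ separately --- where the division by $\norm{k(h)}$ would otherwise be illegitimate --- are precisely the points that require care, and you address each of them.
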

This formula is the crux of all \ad{} engines, as we will see in the sequel. Note that this formula defines the equality between a linear map and the composition of two other linear maps. It is also worth noting that the domains and codomains of these maps are compatible:

\begin{center}
	\begin{tikzpicture}
        \node[text width=4cm, anchor=west, right] at (-0.7,1) {Differentiable Maps};
		\node (A) {$\RR^m$};
		\node (B) [right of=A] {$\RR^n$};
		\node (C) [below of=B] {$\RR^p$};
		\draw[->] (A) to node {$f$} (B);
		\draw[->] (B) to node {$g$} (C);
        \draw[->] (A) to node [swap] {$g\circ f$} (C);
        \node[text width=4cm, anchor=west, right] at (4,1) {Linear Maps};
        \node (A1) [right of=B] {$\RR^m$};
		\node (B1) [right of=A1] {$\RR^n$};
		\node (C1) [below of=B1] {$\RR^p$};
        \draw[->] (A1) to node {$\pa{\dif f}_x$} (B1);
        \draw[->] (B1) to node {$\pa{\dif g}_{f(x)}$} (C1);
        \draw[->] (A1) to node [swap] {$\dif \pa{g\circ f}_x$} (C1);
	\end{tikzpicture}
\end{center}

The other rule that we will use repeatedly is a more abstract version of the derivative of the product. This roughly says that ``differentiating a function of two variables accounts for differentiating the first variable fixing the second one plus differentiating the second variable fixing he first one''.

\begin{proposition}[Leibnitz rule]\label{prop:leibnitz}
    Let $\deffun{f : \RR^n \times \RR^m -> \RR^p;}$ be a differentiable function. Define $\deffun{f_{1,x} : \RR^m -> \RR^p;}$ for $x \in \RR^n$ as $f$ partially evaluated on the first variable on $x \in \RR^n$, that is, $f_{1,x}(y) \defi f(x, y)$. Define also $f_{2, y}(x) = f(x,y)$. We have that
    \[
        \pa{\dif f}_{(x,y)}\pa{e_1, e_2} = \pa{\dif f_{2,y}}_x(e_1) + \pa{\dif f_{1,x}}_y(e_2)
    \]
\end{proposition}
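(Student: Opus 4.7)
The plan is to reduce the statement to two separate identifications, one for each coordinate, and then recombine them using linearity of the differential. Concretely, write $(e_1, e_2) = (e_1, 0) + (0, e_2) \in \RR^n \times \RR^m$, so that by linearity of $\pa{\dif f}_{(x,y)}$ (which comes for free from~\Cref{def:differential}),
\[
    \pa{\dif f}_{(x,y)}\pa{e_1, e_2} = \pa{\dif f}_{(x,y)}\pa{e_1, 0} + \pa{\dif f}_{(x,y)}\pa{0, e_2}.
\]
It then suffices to prove the two identities $\pa{\dif f}_{(x,y)}\pa{e_1, 0} = \pa{\dif f_{2,y}}_x(e_1)$ and $\pa{\dif f}_{(x,y)}\pa{0, e_2} = \pa{\dif f_{1,x}}_y(e_2)$, which are perfectly symmetric, so I would just prove the first one.

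First I would take the defining limit from~\Cref{def:differential} applied to $f$ at $(x,y)$ and restrict the increment to live entirely in the first factor by taking $h = (h_1, 0)$. Since $\norm{(h_1, 0)}_{\RR^{n+m}} = \norm{h_1}_{\RR^n}$ and $f(x+h_1, y) = f_{2,y}(x+h_1)$ by the definition of $f_{2,y}$, the limit becomes
\[
    \lim_{h_1 \to 0}\frac{f_{2,y}(x+h_1) - f_{2,y}(x) - \pa{\dif f}_{(x,y)}\pa{h_1, 0}}{\norm{h_1}_{\RR^n}} = 0.
\]
Now, the map $h_1 \mapsto \pa{\dif f}_{(x,y)}\pa{h_1, 0}$ is linear in $h_1$ (it is $\pa{\dif f}_{(x,y)}$ composed with the linear inclusion $h_1 \mapsto (h_1, 0)$). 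So this limit says precisely that $f_{2,y}$ is differentiable at $x$ with differential given by $h_1 \mapsto \pa{\dif f}_{(x,y)}\pa{h_1, 0}$. I would need to appeal here to the uniqueness of the linear map appearing in~\Cref{def:differential}—which is a standard fact I would state (or cite) on the side—to conclude that this candidate coincides with $\pa{\dif f_{2,y}}_x$.

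The same argument with $h = (0, h_2)$ gives the symmetric identification $\pa{\dif f}_{(x,y)}\pa{0, e_2} = \pa{\dif f_{1,x}}_y(e_2)$, and adding the two gives the claimed formula. I do not expect any real obstacle: the only non-mechanical step is the appeal to uniqueness of the differential, which is an easy consequence of the defining limit (two linear maps whose difference is $o(\norm{h})$ must agree, because a linear map that is $o(\norm{h})$ is zero on every ray and hence everywhere).
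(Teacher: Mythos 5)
Your argument is correct, and there is nothing in the paper to compare it against: the proposition is stated without proof. Your route---splitting $\pa{e_1,e_2}=\pa{e_1,0}+\pa{0,e_2}$ by linearity of the differential, restricting the defining limit to increments of the form $\pa{h_1,0}$ (noting $\norm{\pa{h_1,0}}=\norm{h_1}$), and invoking uniqueness of the differential to identify $h_1 \mapsto \pa{\dif f}_{(x,y)}\pa{h_1,0}$ with $\pa{\dif f_{2,y}}_x$---is the standard way to fill this gap, and every step you flag (in particular the uniqueness lemma) is justified correctly. One small remark: your proof correctly uses the hypothesis that $f$ is differentiable at $\pa{x,y}$ in the only direction that works, since the converse (partial differentiability implying joint differentiability) is false.
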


The formula in this proposition may look quite difficult to parse, but it will be much easier to understand when we use it in the next section to compute the differential of some matrix functions.

\begin{remark}[On gradients]
    Note that we have not defined the gradient of a function yet, but just its differential. The definition of the gradient of a function will have to wait until~\Cref{sec:backward_mode}.
\end{remark}

\section{Forward Mode \ad}\label{sec:forward_mode}
\subsection{The model and definition}
After introducing the necessary concepts from linear algebra and multivariate calculus, we are ready to put them to good use in the context of \ad. Let us consider a model with two differentiable maps
\begin{figure}[H]
\centering
\begin{tikzpicture}
    \node (A) {$\RR^m$};
    \node (B) [right of=A] {$\RR^n$};
    \node (C) [right of=B] {$\RR^p$};
    \draw[->] (A) to node {$g$} (B);
    \draw[->] (B) to node {$f$} (C);
\end{tikzpicture}
\caption{The automatic differentiation model.}\label{fig:model}
\end{figure}

\begin{remark}[Several inputs and outputs]
    The first simplification that we have performed in~\Cref{fig:model} is that, for a map $\deffun{h : \RR^n \times \RR^m -> \RR^p;}$, we can always see it as a map from $\RR^{n + m}$ into $\RR^p$. As such, it is enough to talk about maps with just one input.\footnote{Sometimes, it may be beneficial to look at maps of several inputs though, as some properties such as linearity are not preserved by this transformation. We will see examples of this later in this section.} The same happens with maps with several outputs, only that in this case we can see them as separate maps of one output, and we can differentiate them separately.
\end{remark}

\begin{remark}[Reduction from general \ad]
    When we do automatic differentiation, we first form a directed acyclic graph of dependencies between maps---the model---ending in a map that outputs a real number---the loss function. Without entering into the details of how to define this graph of dependencies, we can think of $f$ as the last step of the graph (in topological order) that produces the real number---\ie, set $p = 1$ above---and let $g$ be the rest of the graph.
    Then, if we want to split $g$ further, we can consider its last step $g_2$ and the rest of the steps $g_1$, write $g = g_1 \circ g_2$ and proceed inductively, as we are in the same situation as above.
\end{remark}

\begin{remark}[Neural networks]
    In the case of neural networks, we have a function $F(\theta, x)$ where $\theta$ are its parameters concatenated and $x$ is an example from our dataset. We then want to differentiate with respect to the parameters. This is exactly the same idea as above, where we have the function $\theta \mapsto F(\theta, x)$ for a fixed $x$.

    Of course, this example does not encode \emph{all} possible neural network architectures, as we have not mentioned what to do when the functions involved are not differentiable, or when we use integers or have \code{if}-\code{else} constructions. That being said, extending the theory presented here to all these ideas is not formally challenging; the complexity lies in the implementation of the resulting algorithm.
\end{remark}

Now that we have everything set, we are ready to define forward mode automatic differentiation.
\begin{definition}
    \textbf{Forward mode \ad} for the model represented in~\Cref{fig:model} accounts for computing
    \[
        \dif\pa{g \circ f}_x\mathrlap{\qquad \text{for } x \in \RR^m.}
    \]
\end{definition}

In plain words, forward mode \ad{} computes the differentials of the model with respect to the parameters. This may be achieved incrementally via the chain rule (\Cref{thm:chain_rule}), as it tells us how to put together the differential of two functions to compute the differential of the composition. These ideas are often presented in the literature as ``forward mode \ad{} accumulates the product of the Jacobians'', which is the same idea but in coordinates.

\begin{remark}[Dual numbers]
    Forward mode \ad{} is frequently defined in terms of \textbf{dual numbers}. Dual numbers are defined through an abstract quantity called $\epsilon$ with the property that $\epsilon^2 = 0$. Points are then described as $v+\dot{v}\epsilon$ and the following expansion is stated
    \begin{equation}\label{eq:dual_numbers}
        f\pa{v+\dot{v}\epsilon} \stackrel{?}{=} f(v) + f'(v)\dot{v}\epsilon.
    \end{equation}
    Now, this second equality is often not justified, and left to the reader to interpret.\footnote{This is not really true. These identities can be formalised via \emph{perturbation theory}~\parencite{kato1995perturbation}.} It is typically shown by means of the computation of the derivative of the product of real numbers.
    \[
        \pa{v+\dot{v}\epsilon}\pa{u+\dot{u}\epsilon} = uv + (\dot{u}v+u\dot{v})\epsilon\mathrlap{\qquad u,v,\dot{u},\dot{v}\in \RR.}
    \]
    A moment's reflection shows that all this approach is encoding is the idea that the differential is a first order approximation to the function! Having an $\epsilon$ such that $\epsilon^2 = 0$ simply says that we just care about the terms that are linear in epsilon, and we discard any term of order two or higher.
    This is exactly what the idea of the differential of a map formalises. As such, a formal justification of~\eqref{eq:dual_numbers} is then given by the order $1$ Taylor expansion in~\eqref{eq:taylor_expansion}.

    The example of the multiplication of real numbers can then be described in the language of calculus as letting $\deffun{f : \RR^2 -> \RR;}$ defined by $f(x,y) = xy$ and computing $\pa{\dif f}_{\pa{u, v}}\pa{\dot{u}, \dot{v}} = \dot{u}v+u\dot{v}$.\footnote{We will prove this formula and generalise it to vectors and matrices later in this section.}
\end{remark}

\subsection{Computing forward mode \ad}
We will spend the rest of this section showing how the abstract definition of the differential (\Cref{def:differential}) makes computations surprisingly easy.

\begin{proposition}[Differential of a linear map]\label{prop:dif_linear_map}
    Let $\deffun{T: \RR^m -> \RR^n;}$ be a linear map, we have that
    \[
        \pa{\dif T}_x(v) = T(v)\mathrlap{\qquad \forall x,v \in \RR^m.}
    \]
\end{proposition}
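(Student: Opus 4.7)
The plan is to verify directly from the definition of the differential (\Cref{def:differential}) that the linear map $T$ itself plays the role of $L_x$. The key observation is that linearity of $T$ makes the residual $T(x+h) - T(x) - T(h)$ identically zero, so the defining limit is trivially satisfied.

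First I would recall the definition: to show $(dT)_x = T$, I need to exhibit a linear map $L_x : \RR^m \to \RR^n$ such that
\[
    \lim_{h \to 0_{\RR^m}}\frac{T(x+h) - T(x) - L_x(h)}{\norm{h}_{\RR^m}} = 0_{\RR^n},
\]
and I claim $L_x = T$ works. By the additive property in \Cref{def:linear_map}, $T(x+h) = T(x) + T(h)$, so the numerator $T(x+h) - T(x) - T(h)$ is the zero vector in $\RR^n$ for every $h \in \RR^m$. Dividing the zero vector by $\norm{h}_{\RR^m} > 0$ gives zero, and the limit is $0_{\RR^n}$. Since $T$ is linear by hypothesis, it qualifies as the linear map appearing in \Cref{def:differential}, so $(dT)_x = T$, and in particular $(dT)_x(v) = T(v)$ for every $v \in \RR^m$.

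The only subtlety, which I would address briefly, is that \Cref{def:differential} only asserts existence of $L_x$, not uniqueness, so one could in principle worry that the notation $(dT)_x$ refers to some other candidate. However, uniqueness of the differential is standard (and in any case, the only thing used below is that $T$ itself satisfies the defining identity), so no real obstacle arises. The proof is essentially a one-line verification; the point of stating it as a proposition is conceptual, to emphasize that under the coordinate-free viewpoint the differential of a linear map is the map itself, independent of the base point $x$.
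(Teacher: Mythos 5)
Your proof is correct and follows exactly the same route as the paper: plug $L_x = T$ into \Cref{def:differential}, use additivity of $T$ to make the numerator vanish identically, and conclude the limit is $0_{\RR^n}$. Your extra remark about uniqueness of the differential is a reasonable clarification but does not change the argument.
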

\begin{proof}
    Plugging $\pa{\dif T}_x(v) = T(v)$ into the definition of the differential we get
    \[
        \lim_{h \to 0_{\RR^m}}\frac{T(x+h) - T(x) - T(h)}{\norm{h}_{\RR^m}} =
        \lim_{h \to 0_{\RR^m}}\frac{T(x) + T(h) - T(x) - T(h)}{\norm{h}_{\RR^m}} =
        \lim_{h \to 0_{\RR^m}}\frac{0_{\RR^n}}{\norm{h}_{\RR^m}} =
        0_{\RR^n}.\qedhere
    \]
\end{proof}

We can use this to compute the differential of a number of functions widely used in machine learning.\vspace{-0.5em}
\begin{example}[Differential of a linear layer. Trailing batch dimension]
    Fix $X \in \M{m, b}$ a batch of $b$ vectors of size $m$ (trailing batch dimension), and let $A \in \M{n,m}$. We can define
    \[
        \deffun{f : \M{n, m} -> \M{n, b}; A -> AX}
        \mathrlap{\qquad\qquad X \in \M{m, b}.}
    \]
    This is just the usual definition of a linear layer depending on the parameter and with fixed inputs. We do this because we want to differentiate with respect to the parameters. It is clear that
    \[
        f(A+B) = f(A) + f(B) \qquad f(cA) = cf(A) \mathrlap{\qquad \forall A,B \in \M{n,m}, c \in \RR.}
    \]
    so $f$ is linear and by~\Cref{prop:dif_linear_map}
    \[
        \pa{\dif f}_A(E) = EX\mathrlap{\qquad{\text{for }E \in \M{n, m}}.}
    \]
\end{example}

\begin{example}[Differential of a linear layer. Front batch dimension]
    The batch dimension in machine learning is often the first dimension of the tensor due to the layout of matrices in memory. For this reason, it is common to write a batch $b$ of $m$-dimensional vectors as $X \in \M{b, m}$. In this case, we may write a linear layer as
    \[
        \deffun{f : \M{n, m} -> \M{b, n}; A -> X\trans{A}}
        \mathrlap{\qquad\qquad X \in \M{b, m}}
    \]
    This function has a transpose and a matrix multiplication. In particular, if we write $g(A) = \trans{A}$ and $h(A) = XA$ we have that $f = h \circ g$.
    Luckily, both the transpose and the matrix multiplication are linear functions, so we can use the chain rule (\Cref{thm:chain_rule}) and the formula for the differential of a linear function twice to compute the differential of $f$:
    \[
        \pa{\dif f}_A(E) =
        \dif\pa{h \circ g}_A(E) =
        \cor{\pa{\dif h}_{g(A)} \circ \pa{\dif g}_A}(E) =
        \pa{\dif h}_{\trans{A}}(g(E)) =
        h(\trans{E}) =
        X\trans{E}.
    \]
    A simpler way of performing this computation is by noting that $f$ is linear itself so
    \[
        \pa{\dif f}_A(E) = f(E) = X\trans{E}.
    \]
\end{example}

\begin{example}[More linear maps]
    Linear maps come in different shapes and forms
    \begin{itemize}
        \item \textbf{Inner product of vectors}.
            Let $\deffun{f_{2,y} : \RR^n -> \RR;}$ with $f_{2,y}(x) = \trans{x}y = \sum_{i=1}^n x_i y_i$ for a fixed $y \in \RR^n$, then $\pa{\dif f_{2,y}}_x(v) = \trans{v}y$. Fixing the first variable, $f_{1,x}(y) = \trans{x}y$, $\pa{\dif f_{1,x}}_y(v) = \trans{x}v$.
        \item \textbf{Trace of a matrix}.
            Let $\deffun{f : \M{n} -> \RR;}$ with $f(A) = \tr\pa{A}$, then $\pa{\dif f}_A(E) = \tr\pa{E}$.
        \item \textbf{Inner product of matrices}.
            Let $\deffun{f_{1,X} : \M{m,n} -> \RR;}$ with $f_{1,X}(A) = \tr\pa{\trans{A}X} = \sum_{i=1}^m\sum_{j=1}^n A_{ij} X_{ij}$ for a fixed $X \in \M{m,n}$. \footnote{Note that $\tr\pa{\trans{A}X}$ is just a convenient way to represent the inner product of matrices as seen as vectors of size $\RR^N$ with $N = mn$.}
            We then have $\pa{\dif f_{1,X}}_A(E) = \tr\pa{\trans{E}X}$ and an analogous formula for the second variable.
    \end{itemize}
\end{example}

\begin{example}[Several inputs]
    Consider the inner product of vectors as a function of two arguments
    \[
        \deffun{f : \RR^n \times \RR^n -> \RR; x, y -> \trans{x}y;}
    \]
    and define the function partially evaluated in its first and second argument as $\deffun{f_{i,x} : \RR^n -> \RR;}$ for $i = 1,2$ so that $f_{1,x}(y) = f_{2,y}(x) = f(x,y)$. We have that $f_{i,x}$ are linear, as for $i=1,2$
    \[
    f_{i,u}\pa{v+w} = f_{i,u}\pa{v} + f_{i,u}\pa{w} \qquad f_{i,u}\pa{cv} = cf_{i,u}\pa{v}\qquad\qquad \forall u,v,w \in \RR^n, c \in \RR.
    \]
    For this reason, we can compute its differential using Leibnitz rule (\Cref{prop:leibnitz})
    \[
        \pa{\dif f}_{(x,y)}\pa{e_1, e_2} = \pa{\dif f_{2,y}}_x(e_1) + \pa{\dif f_{1,x}}_y(e_2) = f_{2,y}(e_1) + f_{1,x}(e_2) = \trans{e_1}y + \trans{x}e_2\qquad x, y, e_1, e_2 \in \RR^n.
    \]
    In contrast, note that $f$ itself is not linear as a function from $\RR^{2n}$ to $\RR$ as $f(ax,ay) = a^2f(x,y)$.
\end{example}

\begin{example}[Powers of a matrix]\label{ex:powers}
    Consider the map that multiplies a matrix with itself $k$ times
    \[
        \deffun{f : \M{n} -> \M{n}; A -> A^k = A \stackrel{k)}{\cdots} A}
    \]
    this is the same as evaluating the map $g(A_1, \dots, A_k) = A_1 \dots A_k$ for $A_i \in \M{n}$ at $(A, A, \dots, A)$. We can compute the differential of $g$ using~\Cref{prop:leibnitz} since $g$ is linear in every entry:
    \[
        \pa{\dif g}_{\pa{A_1, \dots, A_k}}\pa{E_1, \dots, E_k} = E_1A_2 \cdots A_k + A_1E_2 \cdots A_k + \dots + A_1A_2\cdots E_k.
    \]
    So the differential of $f$ is given by
    \[
        \pa{\dif f}_A(E)
        = \pa{\dif g}_{\pa{A, \dots, A}}\pa{E, \dots, E}
        = EA^{k-1} + AEA^{k-2} + \dots + A^{k-1}E
        = \sum_{i=0}^{k-1} A^iEA^{k-i-1}.
    \]
\end{example}

    This example shows that, morally, if we can write a map $f$ as a map $g$ on more variables such that $g$ is linear in each of its variables, all we need to do to compute the differential of $f$ is to substitute each appearance of $A$ by $E$ on $f$ and add them all together. More generally, if the function is not linear in some of the variables, we substitute every appearance of $g(A)$ by $\pa{\dif g}_A(E)$ as described in~\Cref{prop:leibnitz}. We show this idea in the following example.

\begin{example}[Matrix inverse]\label{ex:inverse}
    Let $\GL{n} \subset \M{n}$ be the set of invertible matrices. Define
    \[
        \deffun{f : \GL{n} -> \GL{n}; A -> A^{-1}}
    \]
    We have that, by definition of the matrix inverse
    \[
        Af(A) = \I_n\mathrlap{\qquad \forall A \in \GL{n}.}
    \]
    Defining $g(A) = Af(A)$ this identity can be rewritten as $g(A) = \I_n$ for $A \in \GL{n}$.
    This is an equality between functions---one of them constant---so we may differentiate them.\footnote{Formally, we would first need to define what does it mean to differentiate over $\GL{n}$. Luckily, $\GL{n}$ is an open subset of $\M{n}$, and since the definition of differential is local, we can always define the differential at any matrix $A \in\GL{n}$ by restricting the limit in~\Cref{def:differential} to a neighbourhood of $A$.}
    It is direct to see from~\Cref{def:differential} that the differential of a constant map is the function that maps any $E$ to the zero matrix. On the left-hand side we apply~\Cref{prop:leibnitz} to get
    \[
        EA^{-1} + A\pa{\dif f}_A(E) = 0_{n\times n}
    \]
    and solving for $\pa{\dif f}_A(E)$ we get
    \[
        \pa{\dif f}_A(E) = -A^{-1}EA^{-1}.
    \]
    Note that this is a far-reaching generalisation of the result $(1/x)' = -1/x^2$ for $x \in \RR\backslash\set{0} = \GL{1}$.
\end{example}

Before giving the last result, we show how to extend functions on the real numbers to matrix functions.

\begin{definition}\label{def:matrix_function}
    Let $f_{\RR}(x) = \sum_{k=0}^\infty c_kx^k$ be an analytic function---\ie, a function equal to its Taylor series. We define its associated \textbf{matrix function} as\footnote{A matrix function is defined at $A \in \M{n}$ if and only if all the eigenvalues of $A$ lie in the domain of definition of $f$ when seen as a function from $\CC$ to $\CC$.}
    \[
        \deffun{f : \M{n} -> \M{n} ; A -> \sum_{k=0}^\infty c_k A^k.}
    \]
\end{definition}

\begin{example} Any function with a Taylor series can be turned into a matrix function:

    \vspace{\baselineskip}
    \noindent
    \begin{minipage}[t]{.5\textwidth}
    \begin{itemize}
        \item \textbf{Exponential}: $\exp(A) = \sum_{k=0}^\infty \frac{1}{k!}A^k$
        \item \textbf{Logarithm}: $\log\pa{I_n + A} = \sum_{k=0}^\infty \frac{(-1)^k}{k+1}A^k$
    \end{itemize}
    \end{minipage}%
    \begin{minipage}[t]{.5\textwidth}
    \begin{itemize}
        \item \textbf{Sine}: $\sin(A) = \sum_{k=0}^\infty \frac{(-1)^k}{(2k+1)!}A^{2k+1}$
        \item \textbf{Cosine}: $\cos(A) = \sum_{k=0}^\infty \frac{(-1)^k}{(2k)!}A^{2k}$
    \end{itemize}
    \end{minipage}
    \\[3pt]
\end{example}

We present the last and most general result of this section, which can be roughly summarised as:
\begin{center}
    If we know how to approximate a matrix function, we know how to approximate its differential.
\end{center}

\begin{theorem}[Differential a Matrix Function {\parencite{mathias1996chain}}]\label{thm:mathias}
    Let $\deffun{f : \M{n} -> \M{n};}$ be a matrix function (\Cref{def:matrix_function}). Applying $f$ on a matrix of size $2n \times 2n$, we get the following result by blocks:
    \[
        f\begin{pmatrix}
            A & E \\
            0 & A
        \end{pmatrix} =
        \begin{pmatrix}
            f(A) & \pa{\dif f}_A(E) \\
            0 & f(A)
        \end{pmatrix}\mathrlap{\qquad \forall A,E\in\M{n}.}
    \]
\end{theorem}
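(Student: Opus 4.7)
The plan is to prove the identity first for the monomials $f(A) = A^k$, then extend by $\KK$-linearity to polynomials, and finally pass to the limit to handle an arbitrary analytic matrix function as in Definition~\ref{def:matrix_function}. The key observation is that block-upper-triangular matrices with equal diagonal blocks form a subalgebra of $\M{2n}$ that behaves very nicely under multiplication.

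First I would compute, by induction on $k$, the identity
\[
    \begin{pmatrix} A & E \\ 0 & A \end{pmatrix}^k =
    \begin{pmatrix} A^k & \sum_{i=0}^{k-1} A^i E A^{k-1-i} \\ 0 & A^k \end{pmatrix}
    \qquad \forall A, E \in \M{n}, k \geq 0.
\]
The base case $k=0,1$ is immediate, and the induction step is a single $2\times 2$ block multiplication. The upper-right entry is precisely $\pa{\dif (A \mapsto A^k)}_A(E)$ as computed in~\Cref{ex:powers}. Once this is established for monomials, multiplying by $c_k$ and summing over $k = 0, \dots, N$ gives the desired identity for any polynomial $p(A) = \sum_{k=0}^N c_k A^k$, using that both sides of the statement are $\KK$-linear in $f$ (for the right-hand side, this amounts to the linearity of the differential operator, which follows directly from~\Cref{def:differential}).

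To go from polynomials to a general analytic $f_{\RR}(x) = \sum_{k=0}^\infty c_k x^k$, I would take partial sums $p_N(A) = \sum_{k=0}^N c_k A^k$ and let $N \to \infty$. The left-hand side converges to $f\begin{pmatrix} A & E \\ 0 & A \end{pmatrix}$ by Definition~\ref{def:matrix_function}, provided the eigenvalues of the block matrix lie in the domain of convergence of $f_\RR$; since the matrix is block-upper-triangular with diagonal blocks $A$, its spectrum equals that of $A$, so the series converges exactly when $f(A)$ is defined. The diagonal blocks of the partial sums converge to $f(A)$, and the upper-right block is $\sum_{k=0}^N c_k \sum_{i=0}^{k-1} A^i E A^{k-1-i}$, which is a polynomial expression that converges to $\pa{\dif f}_A(E)$ by continuity of the differential and the linearity argument applied to each partial sum.

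The main obstacle will be justifying that last convergence step rigorously: one must argue that the differential of a uniformly convergent series of analytic matrix functions is obtained by differentiating termwise. The clean way around this is to avoid any independent computation of $\pa{\dif f}_A(E)$ and instead \emph{define} it as the limit of the upper-right blocks; the block-identity then holds tautologically for every $N$, and the existence of the limit (hence the differentiability of $f$, together with the formula for its differential) falls out of the convergence of the block-matrix series. This sidesteps any delicate interchange of limit and differentiation, reducing the proof to the elementary inductive computation above plus the continuity of the matrix functional calculus on the spectrum of the enlarged matrix.
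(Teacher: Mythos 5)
Your proposal is essentially the paper's own proof: both hinge on the computation of the $k$-th power of the block matrix, whose upper-right block is $\sum_{i=0}^{k-1}A^iEA^{k-i-1} = \pa{\dif(A\mapsto A^k)}_A(E)$ by \Cref{ex:powers}, followed by summing the series term by term. You are in fact more careful than the paper on the two analytic points it glosses over: that the block matrix has the same spectrum as $A$ (so both sides of the identity are defined simultaneously), and that termwise differentiation of the series requires justification. One caveat about your proposed escape hatch: you cannot simply \emph{define} $\pa{\dif f}_A(E)$ to be the limit of the upper-right blocks, since the theorem asserts equality with the differential in the sense of \Cref{def:differential}; to close the argument one must still verify that $E \mapsto \sum_{k} c_k \sum_{i=0}^{k-1} A^i E A^{k-i-1}$ satisfies the defining $o(\norm{h})$ estimate, which is exactly the limit--derivative interchange you were hoping to sidestep (it does hold, by standard tail bounds on the power series inside its radius of convergence). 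Since the paper itself asserts ``differentiating the series term by term'' without further comment, your write-up is at least at parity with it.
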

\begin{proof}
    Differentiating the series term by term and using~\Cref{ex:powers} we have that
    \begin{equation}\label{eq:differential_analytic}
        \pa{\dif f}_A(E) = \sum_{k=0}^\infty c_k\sum_{i=0}^{k-1}A^iEA^{k-i-1}.
    \end{equation}
    We can also compute the powers of the block matrix
    \[
        \begin{pmatrix}
            A & E \\
            0 & A
        \end{pmatrix}^k =
        \begin{pmatrix}
            A^k & \sum_{i=0}^{k-1}A^iEA^{k-i-1} \\
            0 & A^k
        \end{pmatrix}
    \]
    so
    \[
        f\begin{pmatrix}
            A & E \\
            0 & A
        \end{pmatrix} =
        \sum_{k=0}^\infty c_k
        \begin{pmatrix}
            A & E \\
            0 & A
        \end{pmatrix}^k =
        \sum_{k=0}^\infty
        \begin{pmatrix}
            c_k A^k & c_k \sum_{i=0}^{k-1}A^iEA^{k-i-1} \\
            0 & c_k A^k
        \end{pmatrix} =
        \begin{pmatrix}
            f(A) & \pa{\dif f}_A(E) \\
            0 & f(A)
        \end{pmatrix}.\qedhere
    \]
\end{proof}

\section{Backward Mode \ad}\label{sec:backward_mode}
In this section, we go over the most popular method of automatic differentiation: Backward \ad. This method has the advantage that, in order to compute the backward pass for a model, one does not have to deal with matrices---the Jacobians associated to the differential of the model---but just with vectors of the same size of the parameters.

\subsection{Inner products, gradients, and adjoints}
Before introducing the definition of backward mode \ad, we need to define one more mathematical concept from linear algebra.
\begin{definition}\label{def:inner_product}
    Let $V$ be a real vector space, a (real) \textbf{inner product} is a map
    \[
        \deffun{\scalar{-,-} : V \times V -> \RR; x, y -> \scalar{x,y}}
    \]
    such that it is
    \begin{itemize}
        \item Bilinear: It is linear in each variable.
        \item Symmetric: $\scalar{x,y} = \scalar{y,x}$ for every $x,y \in V$.
        \item Positive definite: $\scalar{x,x} > 0$ for every $x \in V$, $x \neq 0$.
    \end{itemize}
\end{definition}

We will write \spd{} as short for symmetric positive definite matrix, as we will use them in examples.

\begin{example}\label{ex:inner_prod}
    The following are examples of inner products
    \begin{itemize}
        \item \textbf{Canonical inner product on $\RR^n$:} $\scalar{x,y} = \trans{x}y$ for $x,y \in \RR^n$.
        \item \textbf{Other inner products on $\RR^n$:} $\scalar{x,y} = \trans{x}Hy$ for $x,y \in \RR^n$ and a fixed $H \in \M{n}$ \spd.\footnote{To prove that this is positive definite, consider the Cholesky decomposition of $H = \trans{U}U$ with $U$ upper-triangular.}
        \item \textbf{Canonical inner product on $\M{m,n}$:} $\scalar{A,B} = \tr\pa{\trans{A}B}$ for $A,B \in \M{m,n}$.
        \item \textbf{Other inner products on $\M{m,n}$:} $\scalar{A,B} = \tr\pa{\trans{A}HB}$ for $A,B \in \M{m,n}$ and $H\in\M{m}$ \spd.
    \end{itemize}
\end{example}

Inner products allow us to measure norms of vectors $\norm{x} = \sqrt{\scalar{x, x}}$, angles between vectors $\angle\pa{x,y} = \arccos\frac{\scalar{x,y}}{\norm{x}\norm{y}}$, distances $d(x,y) = \norm{x-y}$, and many other metric properties. As such, it will come to no surprise the fact that inner products are very important in machine learning and optimisation. For one, we need them to talk about the distance from a point to the optimum and rates of convergence. Perhaps less known is the fact that we also require them to talk about gradients.

\begin{remark}[Motivating the concept of gradient]
For a function $\deffun{f : \RR^n -> \RR;}$ and an $x \in \RR^n$, the map $v \mapsto \pa{\dif f}_x(v)$ is a linear function from $\RR^n$ to $\RR$. Now, if we have an inner product $\scalar{-,-}$ on $\RR^n$, for a fixed $g \in \RR^n$, the function $v \mapsto \scalar{g, v}$ is also a linear function from $\RR^n$ to $\RR$. The question now is, given an inner product on $\RR^n$ and a function $f$, can we always represent the differential of $f$ as a vector $g_x \in \RR^n$ such that $\pa{\dif f}_x(v) = \scalar{g_x, v}$? This is, in fact, the case, and it is the definition of a well-known concept.
\end{remark}

\begin{definition}
    Let $\deffun{f : \RR^n -> \RR;}$, and let $\scalar{-, -}$ be an inner product on $\RR^n$. We define the \textbf{gradient of $f$ at $x\in \RR^n$} as the vector $\grad f(x) \in \RR^n$ such that
    \[
        \pa{\dif f}_x(v) = \scalar{\grad f(x), v}\mathrlap{\qquad \forall v \in\RR^n.}
    \]
\end{definition}

\begin{remark} A number of remarks are in order.
    \begin{itemize}
        \item As $x \mapsto \trans{v}x$ for $x,v \in \RR^n$ is a linear function, some people like to think informally of vectors as ``column vectors'' and linear functions as ``row vectors''. This way, the operation of going from a differential $\trans{v} \cdot -$ to a gradient $v$ for the canonical inner product looks like ``transposing'' $\trans{v}$.
        \item \textbf{Important}. The gradient and the differential of a function are not the same thing. The first one is a \textbf{function} into the real numbers, while the latter one is a \textbf{vector}.
        \item The gradient of a function depends on the choice of inner product, the differential on finite-dimensional spaces does not, since all the norms are equivalent.
    \end{itemize}
\end{remark}

\begin{example}
    We compute the gradient of some functions building on results from~\Cref{sec:forward_mode}.
    \begin{itemize}
        \item Consider $\RR^n$ with the canonical inner product, and let $f(x) = \scalar{g, x} = \trans{g}x$ for a fixed $g\in \RR^n$. Since $f$ is linear $\pa{\dif f}_x(v) = \scalar{g, v}$, and by definition of a gradient, $\grad f(x) = g$ for every $x\in\RR^n$.
        \item Consider $\RR^n$ with an arbitrary inner product, and let $f(x) = \scalar{g,x}$ then $\grad f(x) = g$ for every $x\in\RR^n$.
        \item Consider $\RR^n$ with the inner product $\scalar{u,v} = \trans{u}Hv$ for $H$ \spd{} (see \Cref{ex:inner_prod}) and let $f(x) = \trans{x}y$ for a fixed $y \in \RR^n$. As in the first example, $\pa{\dif f}_x(v) = \trans{v}y$ since $f$ is linear. On the other hand
            \[
            \pa{\dif f}_x(v)
            = \trans{v}y
            = \trans{v}HH^{-1}y
            = \scalar{v, H^{-1}y}
            = \scalar{H^{-1}y, v}
            \]
            and so $\grad f(x) = H^{-1}y$ for every $x \in \RR^n$.
        \item Consider $\M{n}$ with the canonical inner product $\scalar{A,B} = \tr\pa{\trans{A}B}$, and let $f(A) = \tr(A)$. Since $f$ is linear $\pa{\dif f}_A(E) = \tr(E) = \tr\pa{\trans{\pa{\I_n}}E} = \scalar{\I_n, E}$. Thus, $\grad f(A) = \I_n$ for every $A \in \M{n}$.
        \item Let $\deffun{f : \RR^n -> \RR;}$, and consider the canonical inner product on $\RR^n$. The $i$-th coordinate of $\grad f(x)$ is equal to $\frac{\partial f}{\partial x_i}(x)$.
    \item Let $\deffun{f : \RR^n -> \RR;}$, and consider the inner product on $\RR^n$ given by $\scalar{x,y} = \trans{x}Hy$ for $H \in \M{n}$ \spd. Denote by $g_x \in \RR^n$ the vector with $i$-th coordinate equal to $\frac{\partial f}{\partial x_i}(x)$---\ie, the gradient of $f$ with respect to the canonical inner product. Then $\grad f(x) = H^{-1}g_x$, while $\pa{\dif f}_x(v) = \trans{g_x}v$ regardless of the inner product.
    \end{itemize}
\end{example}

\begin{remark}[Gradient of a composition]
    Consider a function $\deffun{h : \RR^m -> \RR;}$ defined as a composition $h = f \circ g$ with $\deffun{g : \RR^m -> \RR^n;}$ and $\deffun{f : \RR^n -> \RR;}$ and fix inner products on $\RR^m$ and $\RR^n$. How do we compute the gradient of $h$ in terms of $g$ and $f$? By the chain rule (\Cref{thm:chain_rule}) and the definition of the gradient of $f$ we have that for $x \in \RR^m$, denoting $y = g(x) \in \RR^n$
\[
    \pa{\dif h}_x(v) = \pa{\dif f}_y\cor{\pa{\dif g}_x(v)} = \scalar{\grad f(y), \pa{\dif g}_x(v)}.
\]
To be able to compute the gradient of $h$ at $x$, we would have to solve for $v$ on the last equality, sending the linear map $\pa{\dif g}_x$ to the left-hand side of the inner product. This is exactly what the adjoint of a linear map achieves.
\end{remark}

\begin{definition}
    Let $\deffun{T : V -> W;}$ be a linear map between real finite-dimensional vector spaces with inner products $\scalar{-, -}_V, \scalar{-,-}_W$. We define its \textbf{adjoint} as the linear map $\deffun{T^\ast : W -> V;}$ such that
    \[
        \scalar{w, T(v)}_W = \scalar{T^\ast(w), v}_V\mathrlap{\qquad \forall v\in V, w \in W.}
    \]
\end{definition}

    Before giving examples of the adjoint of some linear maps, we formalise the motivation that led to the definition of the adjoint.

    \begin{proposition}[Gradient of a composition]\label{prop:grad_composition}
        Let $\deffun{g : \RR^m -> \RR^n;}$ and $\deffun{f : \RR^n -> \RR;}$ and fix inner products on $\RR^m$ and $\RR^n$. We have that for every $x \in \RR^m$, denoting $y = g(x) \in \RR^n$,
        \[
            \grad\pa{f \circ g}(x) = \pa{\dif g}^\ast_x\pa{\grad f(y)}
        \]
    \end{proposition}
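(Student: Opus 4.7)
The plan is to string together four facts in one line of inner-product manipulations: the chain rule, the defining property of $\grad f$, the defining property of the adjoint, and the defining property of $\grad(f\circ g)$. Uniqueness of the gradient (a consequence of positive definiteness of the inner product) will then pin down the answer.

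Concretely, I would fix an arbitrary $v\in\RR^m$ and compute $\pa{\dif(f\circ g)}_x(v)$ in two ways. On one hand, by~\Cref{thm:chain_rule},
\[
\pa{\dif(f\circ g)}_x(v) = \pa{\dif f}_y\!\cor{\pa{\dif g}_x(v)}.
\]
Applying the definition of the gradient of $f$ at $y$ to the vector $\pa{\dif g}_x(v)\in\RR^n$ turns the right-hand side into $\scalar{\grad f(y),\,\pa{\dif g}_x(v)}_{\RR^n}$, and the defining property of the adjoint of the linear map $\pa{\dif g}_x$ then moves this across to
\[
\pa{\dif(f\circ g)}_x(v) \;=\; \scalar[\big]{\pa{\dif g}^\ast_x\!\pa{\grad f(y)},\,v}_{\RR^m}.
\]
On the other hand, by the definition of the gradient of the scalar-valued map $f\circ g$,
\[
\pa{\dif(f\circ g)}_x(v) \;=\; \scalar{\grad(f\circ g)(x),\,v}_{\RR^m}.
\]

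The two expressions for $\pa{\dif(f\circ g)}_x(v)$ agree for every $v\in\RR^m$, so I would conclude by invoking uniqueness: if $\scalar{a,v}_{\RR^m} = \scalar{b,v}_{\RR^m}$ for all $v$, then $\scalar{a-b,v}_{\RR^m}=0$ for all $v$; taking $v=a-b$ and using positive definiteness forces $a=b$. Applied with $a=\pa{\dif g}^\ast_x\pa{\grad f(y)}$ and $b=\grad(f\circ g)(x)$, this yields the claimed formula.

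I do not anticipate a genuine obstacle here; the argument is essentially bookkeeping once the three defining equations (chain rule, gradient, adjoint) are lined up. The only subtlety worth flagging in the write-up is that two \emph{different} inner products are in play—the one on $\RR^n$ when we unpack $\grad f(y)$, and the one on $\RR^m$ when we unpack $\grad(f\circ g)(x)$—and the adjoint $\pa{\dif g}^\ast_x$ is precisely the object that bridges them, which is why the statement needs both inner products to be fixed in advance.
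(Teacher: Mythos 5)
Your argument is correct and is essentially the paper's own proof: chain rule, then the defining identity for $\grad f$ at $y$, then the defining identity for the adjoint of $\pa{\dif g}_x$. The paper merely leaves the final uniqueness step (positive definiteness implies the representing vector is unique) implicit, which you have spelled out; this is a welcome but not substantive addition.
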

    \begin{proof}
        We finish the computation that we started before
    \[
        \dif\pa{f \circ g}_x(v) = \pa{\dif f}_{g(x)}\cor{\pa{\dif g}_x(v)} = \scalar{\grad f(y), \pa{\dif g}_x(v)}_{\RR^n} = \scalar{\pa{\dif g}^\ast_x\pa{\grad f(y)}, v}_{\RR^m}.\qedhere
    \]
    \end{proof}

    \begin{example}[Adjoint of the matrix multiplication]\label{ex:adjoint_mm}
        Consider the linear map of multiplying on the right by a matrix $\deffun{R_X : \M{m,n} -> \M{m,p};}$, $R_X(A) = AX$ for a fixed matrix $X \in \M{n,p}$. For the canonical inner products on $\M{m,n}$ and $\M{m,p}$:
        \[
            \scalar{B, R_X(A)}_{\M{m,p}}
            = \tr\pa{\trans{B}AX}
            = \tr\pa{X\trans{B}A}
            = \tr\pa{\trans{\pa{B\trans{X}}}A}
            = \scalar{B\trans{X}, A}_{\M{m,n}}
        \]
        In other words $\pa{R_X}^\ast(B) = B\trans{X} = R_{\trans{X}}(B)$, or simply $\pa{R_X}^\ast = R_{\trans{X}}$.

        An analogous computation gives that, for the left multiplication $L_X(A) = XA$ with respect to the canonical inner products, $\pa{L_X}^\ast = L_{\trans{X}}$.
    \end{example}

    \begin{example}[The adjoint depends on the choice of inner product]
        Consider $R_X(A) = AX$ for $A, X \in \M{n}$ as defined in~\Cref{ex:adjoint_mm} and consider the inner product $\scalar{A,B} = \tr\pa{\trans{A}HB}$ for a fixed $H \in \M{n}$ \spd{} (\cf, \Cref{ex:inner_prod}). We have
        \[
            \scalar{B, R_X(A)}
            = \tr\pa{\trans{B}HAX}
            = \tr\pa{\trans{\pa{B\trans{X}}}HA}
            = \scalar{R_{\trans{X}}(B), A}
        \]
        so $R^\ast_X = R_{\trans{X}}$, as before. On the other hand, for the left multiplication $L_X(A) = XA$,
        \[
            \scalar{B, L_X(A)}
            = \tr\pa{\trans{B}HXA}
            = \tr\pa{\trans{B}HXH^{-1}HA}
            = \scalar{L_{H^{-1}\trans{X}H}(B), A}
        \]
        so $L_X^\ast = L_{H^{-1}\trans{X}H}$, where we have used that the inverse of an \spd{} matrix is symmetric.
    \end{example}

    We finish this section enumerating two properties that will be particularly useful in~\Cref{sec:computing_backward_ad}.
    \begin{proposition}\label{prop:properties_adjoint}
        Let $\deffun{S : U -> V;}$ and $\deffun{T, T_1, T_2 : V -> W;}$ be linear maps between (finite-dimensional real) vector spaces with inner products, then
        \begin{itemize}
            \item The adjoint is linear. Defining $(aT)(v) = aT(v)$ and $(T_1 + T_2)(v) = T_1(v) + T_2(v)$ for $v\in V$, $a\in\RR$, then $(aT)^\ast = aT^\ast$ and $\pa{T_1 + T_2}^\ast = T_1^\ast + T_2^\ast$.
            \item The adjoint reverses the order of the composition: $(T \circ S)^\ast = S^\ast \circ T^\ast$.
        \end{itemize}
    \end{proposition}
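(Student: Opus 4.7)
The plan is to prove both claims directly from the defining property of the adjoint, namely $\scalar{w, T(v)}_W = \scalar{T^\ast(w), v}_V$ for all $v,w$. The one preliminary fact I need is a uniqueness statement: if two linear maps $A, B : W \to V$ satisfy $\scalar{A(w), v}_V = \scalar{B(w), v}_V$ for every $w \in W$ and $v \in V$, then $A = B$. This follows by rearranging to $\scalar{A(w) - B(w), v}_V = 0$ and then picking $v = A(w) - B(w)$, so that positive definiteness (\Cref{def:inner_product}) forces $A(w) = B(w)$ for every $w$. I would state this nondegeneracy argument once at the start of the proof and then invoke it to identify adjoints.

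For the linearity claim, I would compute the inner product $\scalar{w, (aT)(v)}_W$ and $\scalar{w, (T_1 + T_2)(v)}_W$ using bilinearity of $\scalar{-, -}_W$ together with the defining property of $T^\ast$, $T_1^\ast$, $T_2^\ast$. This gives, for arbitrary $v \in V$ and $w \in W$,
\begin{align*}
    \scalar{w, (aT)(v)}_W &= a\scalar{w, T(v)}_W = a\scalar{T^\ast(w), v}_V = \scalar{aT^\ast(w), v}_V, \\
    \scalar{w, (T_1 + T_2)(v)}_W &= \scalar{w, T_1(v)}_W + \scalar{w, T_2(v)}_W = \scalar{(T_1^\ast + T_2^\ast)(w), v}_V,
\end{align*}
and uniqueness then yields $(aT)^\ast = aT^\ast$ and $(T_1 + T_2)^\ast = T_1^\ast + T_2^\ast$.

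For the composition claim, I would apply the adjoint relation for $T$ first and then for $S$: for $u \in U$ and $w \in W$,
\[
    \scalar{w, (T \circ S)(u)}_W = \scalar{w, T(S(u))}_W = \scalar{T^\ast(w), S(u)}_V = \scalar{S^\ast(T^\ast(w)), u}_U = \scalar{(S^\ast \circ T^\ast)(w), u}_U.
\]
Uniqueness then identifies $(T \circ S)^\ast$ with $S^\ast \circ T^\ast$.

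I do not anticipate a serious obstacle: the only subtle point is the uniqueness lemma, which is entirely routine once one remembers that positive definiteness of the inner product rules out nonzero vectors orthogonal to everything. Everything else is a direct unwinding of definitions, and the calculations above are already essentially the whole proof.
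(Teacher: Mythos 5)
Your proof is correct and complete. The paper states this proposition without any proof, so there is nothing to compare against; your argument---reduce each identity to the defining property of the adjoint via bilinearity and then invoke the uniqueness lemma, which you correctly ground in positive definiteness of the inner product---is the standard one the author clearly intends, and the displayed computations carry it out in full.
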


\subsection{The model and definition}\label{sec:backward_ad}
As in the case of forward \ad, we have a model described by a composition of functions. The difference is that, in this case, the last map will be a function mapping the result into the real numbers. In machine learning this is called the \emph{loss function}.

\begin{figure}[H]
\centering
\begin{tikzpicture}
    \node (A) {$\RR^k$};
    \node (B) [right of=A]{$\RR^m$};
    \node (C) [right of=B] {$\RR^n$};
    \node (D) [right of=C] {$\RR$};
    \draw[->] (A) to node {$h$} (B);
    \draw[->] (B) to node {$g$} (C);
    \draw[->] (C) to node {$f$} (D);
\end{tikzpicture}
\caption{The automatic differentiation model.}\label{fig:model_bwd}
\end{figure}

\begin{definition}
    \textbf{Backward mode \ad} for the model represented in~\Cref{fig:model_bwd} with respect to the canonical inner product on $\RR^k$ accounts for computing the gradient
    \[
        \grad \pa{f \circ g \circ h}(x) \mathrlap{\qquad \text{for } x \in \RR^k.}
    \]
\end{definition}

All the theory explained before about linear maps, differentials, inner products, gradients, and adjoints comes together to give this particularly simple definition. Even better, we have laid out the theory in such a way that we have all the tools to compute this quantity. In~\Cref{sec:forward_mode}, we saw how to compute the differential of different maps and how to compose them together. Using~\Cref{prop:grad_composition} we can compute the gradient of the model in terms of the gradient of $f$ and the adjoints of the differentials of $g$ and $h$---for example by choosing the canonical inner product on $\RR^n$ and $\RR^m$.\footnote{It is possible to prove that the result does not depend on the choice of inner product in the intermediate spaces, the only thing that changes is the matrix representation of the functions.} Finally, we use \Cref{prop:properties_adjoint} to compute the adjoint of the composition as the reversed composition of the adjoints.\footnote{It should be clear why in some fields in applied mathematics backpropagation is called \textbf{the adjoint method}.} All this together gives, denoting $y = h(x)$ and $z = g(y) = g(h(x))$,
\[
    \grad \pa{f \circ g \circ h}(x) =
    \pa{\dif h}^\ast_x\cor{\pa{\dif g}^\ast_y\pa{\grad f(z)}}\mathrlap{\qquad \forall x \in \RR^k.}
\]

Note that the last function being applied, $f$, is the first one that we differentiate. We then pass its gradient ``backwards'' to $g$ and then $h$, hence the name of the method.

\subsection{Computing backward mode \ad}\label{sec:computing_backward_ad}
We already computed the adjoint of some linear functions in~\Cref{ex:adjoint_mm}. We now show how these formulae together with the properties from~\Cref{prop:properties_adjoint} are enough to compute the adjoint of the differentials of the maps we considered in~\Cref{sec:forward_mode}.

\begin{example}[Adjoint of the powers of a matrix]
    In~\Cref{ex:powers} we showed that for $f(A) = A^k$
    \[
        \pa{\dif f}_A(E) = \sum_{i=0}^{k-1} A^iEA^{k-i-1} = \sum_{i=0}^{k-1} L_{A^i}\pa{R_{A^{k-i-1}}\pa{E}}\mathrlap{\qquad A,E\in\M{n}}
    \]
    with $L_X(A) = XA$ and $R_X(A)=AX$ being the left and right multiplication.
    For the canonical inner product on $\M{n}$, using that the adjoint is linear (\Cref{prop:properties_adjoint}), and the formulae for the adjoint of $L$ and $R$ (\Cref{ex:adjoint_mm}) we get
    \[
    \pa{\dif f}^\ast_A
        = \pa[\Big]{\sum_{i=0}^{k-1} L_{A^i} \circ R_{A^{k-i-1}}}^\ast
        = \sum_{i=0}^{k-1} \pa{L_{A^i} \circ R_{A^{k-i-1}}}^\ast
        = \sum_{i=0}^{k-1} R_{\pa{\trans{A}}^{k-i-1}} \circ L_{\pa{\trans{A}}^i}.
    \]
    or more explicitly
    \[
        \pa{\dif f}^\ast_A(E) = \sum_{i=0}^{k-1} \pa{\trans{A}}^iE\pa{\trans{A}}^{k-i-1}.
    \]
\end{example}

\begin{example}[Adjoint of matrix function]\label{ex:adjoint_analytic}
    For a matrix function $\deffun{f : \M{n} -> \M{n};}$, $f(A) = \sum_{k=0}^\infty c_k A^k$ (\cf, \Cref{def:matrix_function})
    and the canonical inner product on $\M{n}$ we have that
    \[
        \pa{\dif f}^\ast_A = \pa{\dif f}_{\trans{A}}\mathrlap{\qquad \forall A \in \M{n}.}
    \]
    This follows from the formula for the differential of $f$ computed in~\Cref{eq:differential_analytic} and the properties of the adjoint (\Cref{prop:properties_adjoint}). As a corollary, we get that the adjoint of the differential of an analytic function on matrices can be computed by applying $f$ to a larger function, using the formula in~\Cref{thm:mathias}.
\end{example}

\subsection{Exercises}
We leave here a number of exercises to help the reader wrapping their head around the material. If you want to try just two, have a look at~\Cref{ex:rank,ex:implement}.

\begin{exercise}\label{ex:rank}
    Consider a feed-forward network $F_{A, b}(x) = \ell\pa{\sigma\pa{Ax+b}}$ for a function $\deffun{\ell : \RR^m -> \RR;}$, fixed $x \in \RR^n, A \in \M{m, n}, b \in \RR^m$ and an element-wise function $\sigma$. Show that the gradient with respect to $A$ has rank $1$. In other words, show that if $\tilde{F}(A) = F_{A, b}(x) = \ell\pa{\sigma\pa{Ax+b}}$ for fixed $x, b$, $\grad \tilde{F}(A) = u\trans{v}$ for two vectors $u \in \RR^m$ and $v \in \RR^n$

    \textbf{Hint}. What is the adjoint of the map $A \mapsto Ax+b$ for fixed $x, b$ evaluated on a vector $g \in \RR^m$?
\end{exercise}

The next two exercises look more difficult, but they follow via the same argument as the one above.

\begin{exercise}
    Same as above, but with a feed-forward network of depth $d$.
\end{exercise}

\begin{exercise}
    Same as above, but with a feed-forward network of depth $d$ and in the stochastic setting, where we define the total loss as $\widehat{F}(A) = \frac{1}{r}\sum_{i=1}^r F_{A, b}(x_i)$ for input vectors $\set{x_i}_{i=1}^r$.
\end{exercise}

\begin{exercise}\label{ex:implement}
    Implement the gradient for the two layer feedforward network in~\Cref{lst:ffn}.

    \textbf{Hint}. Name more intermediate variables in \code{FFN.forward} to be able to store them.

    \textbf{Hint}. Reverse the order of the arguments in \code{FFN.forward} for it to be easier to debug. The gradient with respect to \code{b1} is not going to be correct if the gradient with respect to \code{b2} is not correct.
\end{exercise}

\begin{lstlisting}[language=python,escapechar=|,caption={Modify this PyTorch 1.9 code so that autograd passes.},label={lst:ffn}]
import torch

class FFN(torch.autograd.Function):
    @staticmethod
    def forward(ctx, x, y, A1, b1, A2, b2):
        x = (A1 @ x + b1).sigmoid()
        x = (A2 @ x + b2).sigmoid()
        loss = (x - y).pow(2).sum()

        ctx.save_for_backward(...)
        return loss

    @staticmethod
    def backward(ctx, g_l):
        t1, t2, ... = ctx.saved_tensors
        ...
        return None, None, g_A1, g_b1, g_A2, g_b2

class Model(torch.nn.Module):
    def __init__(self, in_features, hidden_features, out_features):
        super().__init__()
        def make_param(*size):
            return torch.nn.Parameter(torch.empty(*size, dtype=torch.double))
        self.register_parameter("A1", make_param(hidden_features, in_features))
        self.register_parameter("b1", make_param(hidden_features))
        self.register_parameter("A2", make_param(out_features, hidden_features))
        self.register_parameter("b2", make_param(out_features))
        torch.nn.init.xavier_normal_(self.A1)
        torch.nn.init.xavier_normal_(self.A2)

    def forward(self, x, y):
        return FFN.apply(x, y, self.A1, self.b1, self.A2, self.b2)

x = torch.rand(32, dtype=torch.double)  # Batch size 1
y = torch.rand(8, dtype=torch.double)
model = Model(32, 16, 8)

args = (x, y, model.A1, model.b1, model.A2, model.b2)
torch.autograd.gradcheck(FFN.apply, args, atol=0.01)
\end{lstlisting}

\begin{exercise}
    Generalise your code in \code{FFN.backwards} to handle batches of arbitrary size. Then, use the code you have implemented to fit \mnist{} and feel good about yourself.
\end{exercise}

\begin{exercise}
    Compute the gradient for a recurrent neural network (\rnn) with respect to the recurrent kernel on PyTorch or just on paper.
\end{exercise}

\begin{exercise}
    Find where the adjoint for \code{matrix\_exp} is implemented in PyTorch and make sure you understand its code. \textbf{Hint}. Look for the function \code{matrix\_exp\_backward}.
\end{exercise}

\section{Complex Maps}\label{sec:complex}
\subsection{Forward mode \ad}
When we derived the formulae for forward mode \ad, they all followed from the definition of differential (\Cref{def:differential}) and the formula differential of a linear map (\Cref{prop:dif_linear_map}). As such, if we can generalise these two to complex maps, we should be able to generalise all the forward mode \ad{} to complex numbers.

In order to do this, we recall the point that we made in~\Cref{ex:complex}, $\CC^n$ is a \textbf{real} vector space of dimension $2n$. This means that for $a \in \RR$,
\[
    u+v \in \CC^n \qquad au \in \CC^n\mathrlap{\qquad\qquad \forall u,v \in \CC^n.}
\]
Furthermore, it means that these operations---again, with $a\in \RR$, \textbf{not} $a\in\CC$---satisfy all the axioms of a real vector space in~\Cref{def:vector_space}.

This real vector space structure treats the $n$ real components and $n$ imaginary components as independent, as if they were two parts of a vector of size $2n$ in $\RR^{2n} = \RR^n \times \RR^n$. As such, the norm of a vector in $\CC^n$ as a real vector space is given by
\[
    \norm{v}^2_{\CC^n} = \sum_{k=1}^n a_k^2 + b_k^2\mathrlap{\qquad \text{for }v = \pa{a_1+ib_1, \dots, a_n+ib_n}.}
\]

Using this norm, we can extend the definition of a differential of a real map to complex maps.
\begin{definition}
    A map $\deffun{f : \CC^m -> \CC^n;}$ is \textbf{real differentiable} at a point $x \in \CC^m$ if there exists a map $\deffun{\pa{\dif f}_x : \CC^m -> \CC^n;}$ which is linear over the real numbers (\cf, \Cref{def:linear_map}) such that
    \[
        \lim_{h \to 0_{\CC^m}}\frac{f(x+h) - f(x) - \pa{\dif f}_x(h)}{\norm{h}_{\CC^m}} = 0_{\CC^n}.
    \]
\end{definition}

\begin{remark}[Real differentiable vs.\ complex differentiable]
    Here we have defined the real differential as an $\RR$-linear map, that is, a map such that $\pa{\dif f}_x(av) = a\pa{\dif f}_x(v)$ for $a \in \RR$. If we require the differential to be $\CC$-linear---that is, $\pa{\dif f}_x(av) = a\pa{\dif f}_x(v)$ for $a \in \CC$---we get the definition of a complex differentiable map, often called \textbf{holomorphic map}.

    It should be clear that, if a complex map is complex differentiable, it is also real differentiable, but the opposite is not true. Consider for example $f(z) = \overline{z}$ for $z \in \CC$. We have that $f(az) = \overline{a}f(z)$ for $a \in \CC$, so it is not $\CC$-linear, but it is $\RR$-linear as $f(az) = af(z)$ for $a \in \RR$. Luckily, we will not need to use holomorphic maps, as real differentiable maps will be enough to compute differentials and gradients.
\end{remark}

The chain rule (\Cref{thm:chain_rule}) and the Leibnitz rule (\Cref{prop:leibnitz}) also hold verbatim for real differentiable maps. We also have the following equivalent to~\Cref{prop:dif_linear_map}.
\begin{proposition}[Differential of a linear map]\label{prop:dif_complex_linear_map}
    Let $\deffun{T: \CC^m -> \CC^n;}$ be an $\RR$-linear map, we have that
    \[
        \pa{\dif T}_x(v) = T(v)\mathrlap{\qquad \forall x, v \in \CC^m.}
    \]
\end{proposition}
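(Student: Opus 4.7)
The plan is to imitate the proof of \Cref{prop:dif_linear_map} essentially verbatim, since the only changes in passing from $\RR$ to $\CC$ are the ambient vector spaces and the norm in the denominator. First I would propose the candidate $(\dif T)_x(v) = T(v)$ and observe that it is $\RR$-linear in $v$ — this is immediate, since it is literally $T$, which is $\RR$-linear by hypothesis. So the candidate is an admissible differential.

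Next I would substitute into the defining limit of real differentiability. The numerator becomes
\[
    T(x+h) - T(x) - T(h),
\]
and by $\RR$-additivity of $T$ — which is part of $\RR$-linearity — this equals $T(x) + T(h) - T(x) - T(h) = 0_{\CC^n}$. Consequently the quotient vanishes identically for every $h \neq 0_{\CC^m}$, and the limit is trivially $0_{\CC^n}$, regardless of the precise form of the norm $\norm{h}_{\CC^m}$ in the denominator.

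There is no real obstacle here: the argument uses only additivity of $T$, never scalar compatibility, so the distinction between $\RR$-linear and $\CC$-linear plays no role in this particular computation. What deserves a small comment is that this is exactly why the statement asks only for $\RR$-linearity of $T$: the definition of real differential needs $(\dif T)_x$ to be $\RR$-linear, and the $\RR$-linearity of $T$ transfers directly. In short, the proof is a one-line chain of equalities identical in form to the one appearing in \Cref{prop:dif_linear_map}, with $\RR^m, \RR^n$ replaced by $\CC^m, \CC^n$ and the corresponding norm; I would just write it out explicitly to make the analogy transparent.
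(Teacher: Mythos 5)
Your proposal is correct and is precisely the argument the paper intends: the paper's proof of \Cref{prop:dif_complex_linear_map} simply states that it is the same as the real case (\Cref{prop:dif_linear_map}), which is the computation you carry out, with the useful added observation that the candidate $\RR$-linearity is inherited from $T$ and that only additivity is ever used.
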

\begin{proof}
    The proof is the same as in the real case.
\end{proof}

Having this, we can compute the differential of many maps, as we did in the real case.

\begin{example}
    We compute the differential of some linear maps from $\CC^m$ to $\CC^n$ or to $\RR^n \subset \CC^n$.
    \begin{itemize}
        \item Let $\deffun{f : \CC^n -> \RR^n;}$, $f(x) = \Im\pa{x}$ be the imaginary part of a vector. Since
            \[
                \Im(x+y) = \Im(x) + \Im(y) \qquad \Im\pa{ax} = a\Im\pa{x} \mathrlap{\qquad \forall x,y \in \CC^n, a \in \RR,}
            \]
            $f$ is $\RR$-linear and $\pa{\dif f}_x(v) = \Im\pa{v}$ for $v \in \CC^n$. Note that $f$ is \textbf{not} $\CC$-linear.
        \item Analogously, if $\deffun{f : \CC^n -> \RR^n;}$, $f(x) = \Re\pa{x}$, $f$ is $\RR$-linear and $\pa{\dif f}_x(v) = \Re\pa{v}$ for $v \in \CC^n$.
        \item Let $L_X(A) = XA$ for $X \in \MC{m,n}$, $A \in \MC{n, k}$. Since $L_X$ is $\CC$-linear, it is in particular $\RR$-linear, so $\pa{\dif L_X}_A(E) = XE$ for $E \in \MC{n,k}$.
        \item Let $f_X(A) = \transc{A}X$ for $X \in \MC{m,n}$, $A \in \MC{k, n}$ where $\transc{A} = \trans{\overline{A}}$. Note that this is \textbf{not} a $\CC$-linear map as $f_X(cA) = \overline{c}f_X(A)$ for $c \in \CC$, but it is an $\RR$-linear map, and as such, $\pa{\dif f_X}_A(E) = \transc{E}X$.
        \item The formulae for the differential of the powers of a matrix (\Cref{ex:powers}), inverse of a matrix (\Cref{ex:inverse}) and differential of a matrix function (\Cref{thm:mathias}) are also valid for complex matrices.
    \end{itemize}
\end{example}

These examples show that \textbf{formulae for forward \ad{} for complex maps are the same as their real counterparts}, as the basic formulae (\Cref{prop:dif_complex_linear_map}, chain rule, and Leibnitz rule) are the same.

\subsection{Backward mode \ad}

For backward mode \ad, all we need is a real inner product (\cf, \Cref{def:inner_product}). To do that all we need to do is to consider $\CC^n$ as a real vector space, as we did in the previous section.

\begin{proposition}
    The \textbf{canonical real inner product on $\CC^n$} as a real vector space for $x,y \in \CC^n$ can be written as
    \[
        \scalar{x,y}_{\CC^n} =
        \sum_{k=1}^n a_kc_k + b_kd_k =
        \Re\transc{x}y
        \mathrlap{\qquad \text{for}\quad\begin{matrix}x = \sum a_k+ib_k \\ y=\sum c_k + id_k\end{matrix}.}
    \]
    The \textbf{canonical real inner product on $\MC{m, n}$} as a real vector space for $X,Y \in \MC{m,n}$ can be written as
    \[
        \scalar{X,Y}_{\MC{m,n}} =
        \sum_{j=1,k=1}^n A_{jk}C_{jk} + B_{jk}D_{jk} =
        \Re\tr\pa{\transc{X}Y}
        \mathrlap{\qquad \text{for}\quad \begin{matrix}X = A+iB \\ Y=C+iD\end{matrix}.}
    \]
\end{proposition}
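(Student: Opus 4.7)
The plan is to read off the first equality in each formula from the definition of the canonical real inner product on a real vector space of known dimension, and then to verify the second equality by a short entrywise expansion.

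First I would invoke the identification $\CC^n \cong \RR^{2n}$ as real vector spaces recalled in~\Cref{ex:complex}, sending $x = a + ib$ with $a, b \in \RR^n$ to $(a_1, \dots, a_n, b_1, \dots, b_n)$. By definition, the canonical real inner product on $\CC^n$ viewed as a real vector space is the pullback along this isomorphism of the canonical inner product $\trans{u}v = \sum_i u_i v_i$ on $\RR^{2n}$. Pulling it back gives $\sum_{k=1}^n a_k c_k + b_k d_k$, which is the first equality. For the second equality, I would expand $\transc{x}y = \trans{\overline{x}}y$ entrywise: the product $(a_k - ib_k)(c_k + id_k)$ splits as $(a_k c_k + b_k d_k) + i(a_k d_k - b_k c_k)$, so summing over $k$ and taking the real part isolates exactly $\sum_{k=1}^n a_k c_k + b_k d_k$, as required.

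The matrix case follows by the same argument applied entrywise: the identification $\MC{m,n} \cong \RR^{2mn}$ via $X = A + iB \mapsto (A, B)$ gives the first equality, and the identity $\tr(\transc{X}Y) = \sum_{j,k} \overline{X_{jk}} Y_{jk}$ combined with the same one-line expansion, performed per index pair $(j,k)$, recovers the second.

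There is no serious obstacle here; the argument is a direct unpacking of definitions. The one subtlety worth flagging is that $\transc{x}y$ and $\tr(\transc{X}Y)$ are themselves complex numbers---they are the standard Hermitian (sesquilinear) inner products on $\CC^n$ and $\MC{m,n}$---so taking $\Re$ is essential in order to land in $\RR$ and obtain a real inner product in the sense of~\Cref{def:inner_product}. The discarded imaginary parts form an antisymmetric $\RR$-bilinear pairing, namely the symplectic form induced by the complex structure, but this plays no role in the proposition itself.
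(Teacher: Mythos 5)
Your proof is correct and takes essentially the same route as the paper's: the first equality is read off from the identification of $\CC^n$ (resp.\ $\MC{m,n}$) with $\RR^{2n}$ (resp.\ $\RR^{2mn}$), and the second follows by expanding the Hermitian product into its real and imaginary parts and keeping the real part. The only presentational difference is that the paper works in trace notation for the matrix case and recovers vectors as $\MC{n,1}$ while you argue entrywise; even your closing remark about the discarded imaginary part being a symplectic form mirrors the paper's own footnote.
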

\begin{proof}
    Note that the first equality in the vector case comes the definition of the canonical real inner product on $\CC^n$, which is just the inner product on $\RR^{2n}$ (\cf, \Cref{ex:inner_prod}). Same happens for the $\MC{m,n}$ case.

    We prove this proposition for the matrix case, as vectors can be seen as the case $\MC{n,1} = \CC^n$.

    We start by rewriting the left-hand side in a coordinate-free way
    \[
        \sum_{j,k} A_{jk}C_{jk} + B_{jk}D_{jk} =
        \scalar{A,C}_{\M{m,n}} + \scalar{B,D}_{\M{m,n}}.
    \]
    Thus, we just need to prove that $\Re\tr\pa{\transc{X}Y} = \scalar{A,C}_{\M{m,n}} + \scalar{B,D}_{\M{m,n}}$, but this is direct as
    \[
        \tr\pa{\transc{X}Y}
        = \tr\pa{\pa{\trans{A}-i\trans{B}}\pa{C+iD}}
        = \underbrace{\tr\pa{\trans{A}C} + \tr\pa{\trans{B}D}}_{\text{real part}} + i\underbrace{\cor{\tr\pa{\trans{A}D} - \tr\pa{\trans{C}B}}}_{\text{imaginary part}}.
    \]
    Note that since $\scalar{A,C}_{\M{m,n}} + \scalar{B,D}_{\M{m,n}}$ is a (real) inner product,\footnote{Note that in mathematics we also find complex inner products. These are sesquilinear maps rather than bilinear. These products are more general than the real inner products, as their real part is always a real inner product, while their imaginary part is a non-degenerate symplectic (\ie, skew-symmetric) bilinear form. Luckily, we do not need these to compute gradients.}
        so is $\Re\tr\pa{\transc{X}Y}$. In other words, it is a symmetric positive definite (real) bilinear map (\Cref{def:inner_product}).
\end{proof}

All this proposition says is that $\Re\tr\pa{\transc{X}Y}$ is a convenient way to write the canonical real inner product on $\MC{m,n}$. Now, since $\MC{m,n}$ is a real vector space, and we have a real inner product on it, all the definitions and general results in~\Cref{sec:backward_mode} translate to this setting. Note that the gradients are just defined for functions with values in $\RR$, not $\CC$, while the adjoints are defined for arbitrary maps.

\begin{example}
    Consider the canonical real inner products in each of the spaces
    \begin{itemize}
        \item Let $\deffun{f : \CC^n -> \RR;}$, $f(x) = \Re\trans{g}x$ for a fixed $g\in \CC^n$. Since $f$ is $\RR$-linear $\pa{\dif f}_x(v) = \Re\trans{g}x = \scalar{\overline{g}, v}_{\CC^n}$, and by definition of a gradient, $\grad f(x) = \overline{g}$ for every $x\in\CC^n$.
        \item For $L_X(A) = XA$, $X \in \MC{m,n}, A \in \MC{n,k}$, we have that $\deffun{L_X^\ast : \MC{n,k} -> \MC{m,n};}$ is given by $L_X^\ast(E) = \transc{X}E$, so that $L_X^\ast = L_{\transc{X}}$.\footnote{This shows why some areas of mathematics abuse the notation and write $\transc{X}$ as $X^\ast$.}
        \item Similarly, if $R_X(A) = AX$ for complex $A,X$, $R_X^\ast = R_{\transc{X}}$.
        \item \Cref{ex:adjoint_analytic} translates to $\pa{\dif f}^\ast_A = \pa{\dif f}_{\transc{A}}$ for an analytic function $\deffun{f : \MC{n} ->\MC{n};}$ and $A \in \MC{n}$.
    \end{itemize}
\end{example}

In this case, the formulae are almost the same, but it tends to happen that when a matrix or a vector is transposed in the real case, it is transposed and conjugated in the complex case.

\printbibliography

\end{document}